\documentclass{article} % For LaTeX2e
\clearpage
\usepackage{iclr2026_conference,times}
\usepackage{microtype}
\usepackage{graphicx}
\usepackage{subcaption}
\usepackage{booktabs} % for professional tables
\usepackage{multirow}
\usepackage{makecell}
\usepackage{adjustbox}
\usepackage{threeparttable}
\usepackage{caption}
\usepackage{wrapfig}
\usepackage{enumitem}
\usepackage{amssymb}
\usepackage{amsthm}
\newtheorem{theorem}{Theorem}
\usepackage{amsmath,amsfonts,bm}

\def\eqref#1{equation~\ref{#1}}
\def\1{\bm{1}}

\DeclareMathAlphabet{\mathsfit}{\encodingdefault}{\sfdefault}{m}{sl}
\SetMathAlphabet{\mathsfit}{bold}{\encodingdefault}{\sfdefault}{bx}{n}

\usepackage{hyperref}
\usepackage{url}

\usepackage[table]{xcolor}
\setlist[itemize]{itemsep=3pt, topsep=6pt, leftmargin=1.5em}
\usepackage{hhline}
\definecolor{TableHeader}{RGB}{210, 225, 235}
\usepackage[breakable]{tcolorbox}
\usepackage{fvextra} % for Verbatim with automatic line breaking
\usepackage{pgfplots}
\pgfplotsset{compat=1.18}
\makeatletter
\providecommand{\FloatBarrier}{%
  \ifx\@deferlist\@empty\else\clearpage\fi}
\makeatother

\usepackage{xspace}

\newcommand{\ourmethod}{EvoKernel\xspace}

\usepackage{amsmath}
\usepackage{mathtools}
\newtheorem{lemma}[theorem]{Lemma}
\newtheorem{corollary}[theorem]{Corollary}
\theoremstyle{definition}

\theoremstyle{remark}
\newtheorem{remark}[theorem]{Remark}

\usepackage[capitalize, noabbrev]{cleveref}

\makeatletter
\renewenvironment{abstract}{%
  \centerline{\large\textsc{Abstract}}
  \vspace{1ex}%
  \begin{quote}%
}{%
  \end{quote}%
  \par\bigskip %
}
\makeatother

\iclrfinalcopy

\title{Towards Cold-Start Drafting and Continual Refining: A Value-Driven Memory Approach with Application to NPU Kernel Synthesis}

\author{%
  \textbf{Yujie Zheng\textsuperscript{*,\,1}},\hspace{0.6em} \textbf{Zhuo Li\textsuperscript{*,\,1}},\hspace{0.6em} \textbf{Shengtao Zhang\textsuperscript{1}},\hspace{0.6em}
  \textbf{Hanjing Wang\textsuperscript{2}},\hspace{0.6em}
  \textbf{Junjie Sheng\textsuperscript{3}},\hspace{0.6em}
  \textbf{Jiaqian Wang\textsuperscript{1}}, \\
  \textbf{Junchi Yan\textsuperscript{1}},\hspace{0.6em}
  \textbf{Weinan Zhang\textsuperscript{1}},\hspace{0.6em} \textbf{Ying Wen\textsuperscript{1}},\hspace{0.6em} \textbf{Bo Tang\textsuperscript{4}},\hspace{0.6em} \textbf{Muning Wen\textsuperscript{\textdagger,\,1}}\\[1em]
  \normalfont
  \textsuperscript{1}Shanghai Jiao Tong University \quad \textsuperscript{2}Shanghai Artificial Intelligence Laboratory \\
  \textsuperscript{3}Independent Researcher \quad \textsuperscript{4}MemTensor (Shanghai) Technology Co., Ltd
}

\begin{document}

\maketitle
\renewcommand{\thefootnote}{\fnsymbol{footnote}}
\footnotetext[1]{Equal contribution.}
\footnotetext[2]{Corresponding author: Muning Wen (muningwen@sjtu.edu.cn)}
\thispagestyle{plain}
\pagestyle{plain}

\vskip 0.1in

\begin{abstract}
Deploying Large Language Models to data-scarce programming domains poses significant challenges, particularly for kernel synthesis on emerging Domain-Specific Architectures where a ``Data Wall'' limits available training data.
While models excel on data-rich platforms like CUDA, they suffer catastrophic performance drops on data-scarce ecosystems such as NPU programming.
To overcome this cold-start barrier without expensive fine-tuning, we introduce \textbf{\ourmethod}, a self-evolving agentic framework that automates the lifecycle of kernel synthesis from initial drafting to continual refining.
\ourmethod addresses this by formulating the synthesis process as a memory-based reinforcement learning task. Through a novel value-driven retrieval mechanism, it learns stage-specific Q-values that prioritize experiences based on their contribution to the current objective—whether bootstrapping a feasible draft or iteratively refining latency.
Furthermore, by enabling cross-task memory sharing, the agent generalizes insights from simple to complex operators.
By building an NPU variant of KernelBench and evaluating on it, \ourmethod improves frontier models' correctness from 11.0\% to 83.0\% and achieves a median speedup of 3.60$\times$ over initial drafts through iterative refinement. This demonstrates that value-guided experience accumulation allows general-purpose models to master the kernel synthesis task on niche hardware ecosystems. Our official page is available at \url{https://evokernel.zhuo.li}.
\end{abstract}

\section{Introduction}
A practical limitation when deploying Large Language Models (LLMs) to niche domains is their inability to generalize beyond their pre-training distribution~\citep{minaee2024large, wang2024generalization}. When faced with \emph{cold-start} scenarios, domains where training data is sparse and expert demonstrations are unavailable, even frontier models struggle significantly~\citep{kostikova2025lllms, joel2410survey}. 
This challenge is particularly acute in domains where (i) correctness is binary and machine-verifiable, leaving little room for ``partially correct'' solutions~\citep{jain2024livecodebench, yan2024codescope}, (ii) expert knowledge is scarce and expensive to acquire, and (iii) the gap between in-distribution and out-of-distribution performance is stark.

Automated kernel synthesis for emerging hardware accelerators exemplifies this extreme scarcity~\citep{yu2026towards}. While the industry is aggressively diversifying toward Domain-Specific Architectures (DSAs) like NPUs, TPUs, and neuromorphic chips~\citep{silvano2025survey, liao2021ascend, jouppi2023tpu} to address escalating computational costs~\citep{kaplan2020scaling}, these nascent ecosystems face a severe ``Data Wall''. Unlike the mature NVIDIA landscape, where decades of CUDA repositories provide a massive pre-training corpus, emerging platforms are characterized by extreme data scarcity: public code is rare, documentation is esoteric, and compiler feedback is opaque~\citep{joel2410survey}. This barrier is compounded by the fact that highly optimized CUDA kernels~\citep{choquette2021nvidia, wu2023pytorch} are not portable to these architectures due to fundamental differences in memory hierarchy and instruction sets, leaving foundation models with virtually no expert demonstrations to bridge the cold-start gap.

\begin{table}[t]
    \centering
    \caption{Few-shot functional correctness (pass@4) of frontier LLMs on CUDA vs.\ Ascend C kernel generation. Results are from our experiments; the level definitions (L1, L2) and setup details are consistent with Section~\ref{sec:exp_setup}.}
    \label{tab:cuda_vs_ascendc}
    \setlength{\tabcolsep}{12pt}
    \begin{tabular}{lccc}
      \toprule
      \textbf{Model} & \textbf{Level} & \textbf{CUDA (\%)} & \textbf{Ascend C (\%)} \\
      \midrule
      \multirow{2}{*}{GPT-5.2}
        & L1 & 92.0 & 14.0 \\
        & L2 & 90.0 & 2.0 \\
      \midrule
      \multirow{2}{*}{DeepSeek-V3.2}
        & L1 & 50.0 & 8.0 \\
        & L2 & 9.0 & 0.0 \\
      \midrule
      \multirow{2}{*}{Qwen3-Coder-30B}
        & L1 & 46.0 & 7.0 \\
        & L2 & 10.0 & 0.0 \\
      \bottomrule
    \end{tabular}
\end{table}

As evidenced in Table~\ref{tab:cuda_vs_ascendc}, state-of-the-art LLMs that achieve high performance on CUDA~\citep{ouyang2025kernel} suffer a catastrophic collapse when transferred to a data-scarce Domain-Specific Language (DSL) like Ascend C, which is specifically designed for NPU kernel programming. In line with prior findings~\citep{wen2025multikernelbench}, even GPT-5.2, which attains 92\% on CUDA L1 tasks, drops to 14\% on Ascend C; on the more challenging L2 tasks, models fail entirely. This observation suggests that current models do not genuinely ``learn'' to program new hardware like NPUs, but instead rely on memorized patterns from pre-training distributions.

Standard paradigms to bridge this gap prove insufficient in such data-scarce domains. Supervised Fine-Tuning (SFT)~\citep{zhou2023lima, chung2024scaling} demands thousands of expert-labeled examples per domain~\citep{longpre2023flan}, which is prohibitively expensive when targeting rapidly evolving or niche environments like NPU programming. Parametric policy-based Reinforcement Learning~\citep{zhang2025settling, kakade2003sample} requires extensive online rollouts to update model weights, incurring high sample complexity~\citep{cao2024beyond, qi2025sample} and risking catastrophic forgetting of general capabilities. Traditional Retrieval-Augmented Generation (RAG)~\citep{lewis2020retrieval} falters when the database is sparse~\citep{contal2024ragsys, barnett2024seven}; even with relevant samples, similarity-based retrieval does not guarantee effectiveness~\citep{izacard2023atlas}. Consequently, the core challenge is a \textbf{cold-start} problem: \textit{How can an agent autonomously master a rigorous, data-scarce kernel synthesis task from scratch, without expert demonstrations or expensive fine-tuning?}

To address this, we introduce \textbf{\ourmethod}, a framework that formulates kernel synthesis as a reinforcement learning task over a self-evolving memory. By employing a novel value-driven retrieval mechanism, the agent learns stage-specific Q-values to quantify the utility of historical experiences, dynamically shifting focus from bootstrapping functional correctness (Drafting) to optimizing latency (Refining) without updating model weights. Empirically, EvoKernel bridges the cold-start gap on NPU benchmarks, boosting the correctness of frontier models \textit{from 11.0\% to 83.0\%} and achieving a \textit{3.60x median speedup} over the first feasible draft, thereby demonstrating that value-guided experience accumulation enables general-purpose models to master data-scarce hardware ecosystems.

Our contributions are summarized as follows:
\begin{itemize}
    \item \textbf{Unified Drafting-Refining Pipeline:} We propose a two-stage framework over a shared memory that transitions from feasibility-driven drafting to latency-driven refining to bootstrap and optimize NPU kernels. 
    \item \textbf{Evolving Value-Driven Retrieval:} We introduce a retrieval mechanism that learns stage-specific Q-values to quantify memory utility. A unified Monte-Carlo update adapts the policy from verifier feedback without updating model weights. 
    \item \textbf{Comprehensive Evaluation and Insights:} EvoKernel boosts performance on NPU benchmarks from 11.0\% to 83.0\%. We provide in-depth analysis of cross-task transfer, emergent curricula, and scaling to out-of-distribution workloads such as the Attention Set and recent MHC kernels, demonstrating how memory autonomously bridges the data-scarce gap. 
\end{itemize}

\section{Related Work}
\textbf{Self-Evolving and Adaptive Agents.}
While Large Language Models (LLMs) are typically static, recent research explores mechanisms for self-improvement.
Inference-time techniques, such as Self-Refine~\citep{madaan2023self} and Tree-of-Thoughts~\citep{yao2023tree}, utilize iterative critique loops to enhance reasoning within a single episode, though these improvements are transient, resetting once the context window closes~\citep{shinn2023reflexion}.
Closest to our work are evolutionary frameworks like AlphaEvolve~\citep{novikov2025alphaevolve} and EvolveR~\citep{wu2025evolver}, which accumulate experience across episodes. These methods typically assume sufficient initial competency or verifiable intermediate states, conditions absent in the rigid ``all-or-nothing'' compilation environment of data-scarce kernel synthesis, where our approach operates.

\textbf{Memory-Augmented Generation.}
To overcome context limitations, systems like MemGPT~\citep{packer2023memgpt} and MemOS~\citep{li2025memos, chhikara2025mem0} introduce operating-system-like memory hierarchies for long-horizon tasks. In agentic workflows, Voyager~\citep{wang2023voyager} and other generative agents~\citep{park2023generative, fang2025memp} demonstrate the power of retrieving procedural skills or behavioral reflections~\citep{madaan2022memprompt}. More recently, Memento~\citep{zhou2508memento} and MemRL~\citep{zhang2026memrl} have formalized retrieval as a reinforcement learning problem, learning what to retrieve. We adapt this value-based retrieval paradigm to kernel engineering, where surface-level semantic similarity often fails.

\textbf{Automated Kernel Synthesis.}
Kernel synthesis demands strict functional correctness and hardware-specific optimization. Benchmarks like KernelBench~\citep{ouyang2025kernel} and MultiKernelBench~\citep{wen2025multikernelbench} reveal that general-purpose LLMs degrade sharply on unfamiliar backends due to domain shifts~\citep{li2025tritonbench}. To mitigate this, recent agentic frameworks such as QiMeng-Kernel~\citep{zhu2025qimeng} and KernelBand~\citep{ran2025kernelband} utilize iterative execution feedback for refinement, as do multi-agent systems like STARK~\citep{dong2025stark} and AKG Kernel Agent~\citep{du2025akg}. Supervised approaches like Kevin~\citep{baronio2025kevin} and AutoTriton~\citep{li2025autotriton, woo2025tritonrl} fine-tune models on domain-specific corpora. These methods often assume access to high-quality training data, limiting their applicability in emerging ecosystems. EvoKernel addresses this cold-start setting by learning to retrieve from a self-evolving memory bank rather than relying on static corpora.

\section{EvoKernel: Value-Driven Memory Update for Kernel Evolution}

As shown in Figure~\ref{fig:framework}, we propose the EvoKernel, a framework that automates the lifecycle of hardware-specific kernel synthesis, from cold-start drafting to continual performance refinement. In this paper, we instantiate the framework primarily on Ascend C, while the same agent loop can be specialized to other backends through backend-specific prompts, verifier toolchains, and profiling signals. We formulate this process as a Memory-based Markov Decision Process (M-MDP)~\citep{zhou2508memento, zhang2026memrl}, where an agent learns to retrieve high-utility experiences to guide a LLM generator.

\label{sec:method}
\begin{figure*}[!htbp]
\centering
\includegraphics[width=\textwidth]{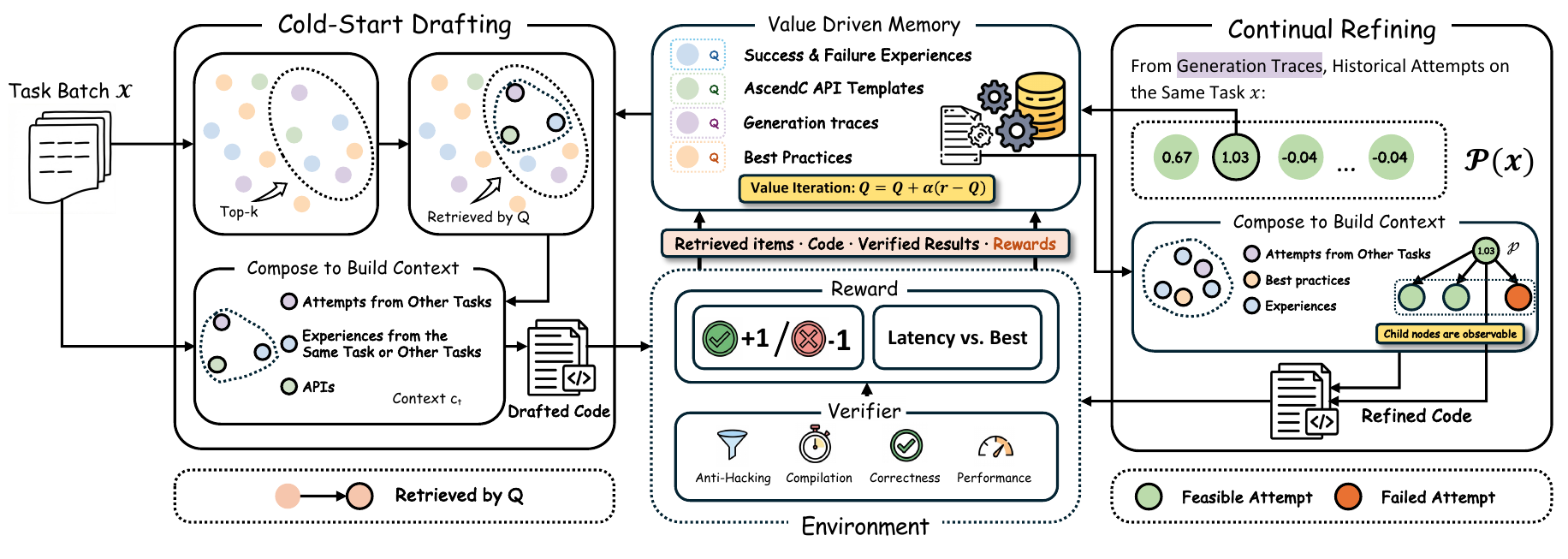}
\caption{The \ourmethod framework. \textbf{(Left) Cold-Start Drafting:} Given task batch $\mathcal{X}$, retrieves top-$k$ candidates, filters context via $Q$, and synthesizes an initial kernel. \textbf{(Center) Environment \& Memory:} A multi-gate verifier assesses generated code to yield rewards, which update $Q$ via value iteration; code and results are stored in Memory. \textbf{(Right) Continual Refining:} Exploits generation traces $\mathcal{P}(x)$ and historical attempts, including observable child nodes, to iteratively optimize for lower latency.}
\label{fig:framework}
\end{figure*}

\subsection{Problem Formulation}
\label{sec:problem_formulation}

A kernel synthesis task $x\in\mathcal{X}$ is specified by a PyTorch reference operator and metadata (e.g., input shapes and operator hyperparameters).
Given a task $x$ and retrieved context $c$, a generator $G_\theta$ samples a kernel and the goal is to generate a kernel source code $y \in \mathcal{Y}$ that satisfies functional correctness and minimizes execution latency.

We model the generation process as an M-MDP over a horizon $T$.A trajectory is defined as $\tau = (s_0, c_0, a_0, r_0, \dots, s_T)$, governed by the tuple $(\mathcal{S}, \mathcal{A}, \mathcal{M}, \mathcal{P}, \mathcal{R})$. The components are defined as follows:

\textbf{State Space ($\mathcal{S}$):}
A state $s_t$ is defined as a tuple $(x, \xi_t)$, where $x \in \mathcal{X}$ denotes the static kernel task (PyTorch operator + metadata), and $\xi_t$ represents the \emph{dynamic generation state} (e.g., current best-so-far latency or verification status).

\textbf{Action Space ($\mathcal{A}$):}
The action $a_t \in \mathcal{A}$ corresponds to a generated kernel code $y \in \mathcal{Y}$.

\textbf{Memory ($\mathcal{M}$):}
We define $\mathcal{M}_t$ as a dynamic, self-evolving memory bank. It is initialized as $\mathcal{M}_0$ comprising seed knowledge. At each step $t$, it accumulates the agent's interaction history, updating according to the rule:
\begin{equation}
\mathcal{M}_{t+1} \leftarrow \mathcal{M}_t \cup \{ (s_t, a_t, r_t) \},
\end{equation}
\textbf{Transition Dynamics ($\mathcal{P}$):}
The transition dynamics $\mathcal{P}: \mathcal{S} \times \mathcal{A} \rightarrow \Delta(\mathcal{S})$ describe the evolution of the generation process.
Since task $x$ remains invariant within an episode, $\mathcal{P}$ deterministically updates the generation state:
\begin{equation}
s_{t+1} = (x, \xi_{t+1}), \quad \xi_{t+1} = f(x, \xi_t, a_t, o_t),
\end{equation}
Here, $f$ updates the dynamic generation state by integrating the action $a_t$ and its verifier outcome $o_t$, conditioned on the task $x$ and the previous state $\xi_t$.

\textbf{Reward Function ($\mathcal{R}$):}
The environment provides a scalar feedback signal $r_t \in \mathbb{R}$ based on evaluation of the action $a_t$. 

\textbf{Policy Factorization.} To tackle this M-MDP, the agent operates via a composite policy. At each step $t$, a Retrieval Policy $\mu$ first selects a context $c_t \subset \mathcal{M}_t$ based on the current state. Conditioned on this context, the Generator Policy $G_\theta$  samples the code:
\begin{equation}
\label{eq:policy_factorization}
\pi(y_t | s_t, \mathcal{M}_t) = G_\theta(a_t | s_t, c_t) \cdot \mu(c_t | s_t, \mathcal{M}_t),
\end{equation}
Our core methodology focuses on optimizing $\mu$ via reinforcement learning to identify high-utility memory items, while $G_\theta$ leverages the pre-trained capabilities of the LLM.

\subsection{Memory Architecture and Value-Driven Retrieval}
\label{sec:memory_value}
The efficacy of the generator $G_\theta$ depends critically on the quality of the context $c_t$. We design $\mathcal{M}$ as a heterogeneous knowledge base containing: 
(i) API templates for the active backend when such documentation is available (e.g., Ascend C),
(ii) summarized success and failure experiences,
(iii) generation traces, including both draft and refined variants, and
(iv) best practices for kernel refinement.

To instantiate the policy $\mu$, we introduce \textbf{Value-Driven Retrieval}. Unlike traditional similarity-based retrieval, our approach dynamically evaluates memory item utility based on the current generation stage. For state $s$ and candidate memory item $m$, we define a Q-value function $Q_k(s, m)$ that estimates the expected benefit of including $m$ in the context at stage $k$.

For a given task $x$, let $N$ denote the final retrieval count. We first use dense retrieval to obtain a top-$K$ candidate pool $\mathcal{C}(x) \subset \mathcal{M}$, where $K=\lambda N$ and $\lambda$ is an over-retrieval multiplier. We then use stage-specific value estimates $Q_k$ to filter these top-$K$ candidates down to the final $N$ items for the context. These values reflect the agent's evolving objectives:
\begin{itemize}
\item \textbf{Drafting Stage ($Q_1$):} Estimates the likelihood that $m$ contributes to a \emph{functionally correct} kernel.
\item \textbf{Refining Stage ($Q_2$):} Estimates the contribution of a memory item $m$ to \emph{latency optimization} of the kernel, where $m$ can either be an optimization start point $p$ or auxiliary refinement items from $\mathcal{M}$.
\end{itemize}

In the upgraded system, the drafting context is assembled by a \emph{hybrid retrieval policy}: experiential memories and code traces remain value-selected, while API knowledge is retrieved through a backend-aware mixture of static infrastructure bundles, exact-name lookup from retrieved code examples, and semantic/category-based search. This separation is important in practice because API utility is largely determined by backend coverage and operator decomposition, whereas experiential memories benefit directly from online value estimation.

\textbf{Unified Value Update.}
Despite the distinct objectives, we employ a unified Monte-Carlo (MC) update rule to refine the retrieval policy $\mu$. Upon observing a reward $r_t$ (defined in subsequent sections) after using context items $c_t$, we update the Q-values for all $m \in c_t$:
\begin{equation}
  \label{eq:unified_update}
  Q(s, m) \leftarrow Q(s, m) + \alpha \cdot \big(r - Q(s, m)\big),
\end{equation}
where $\alpha$ is the step size. This update rule allows the retrieval policy $\mu$ to continuously adapt to the evolving capabilities of $G_\theta$. We provide formal guarantees on boundedness and convergence of these value estimates in Appendix~\ref{app:proofs}.

\subsection{Stage 1: Cold-Start Drafting}
\label{sec:stage1}
The objective of this stage is to obtain an initial \emph{feasible} kernel that can bootstrap subsequent refinement.
For a task $x$, we iteratively (i) retrieve a drafting context $c_t \subset \mathcal{C}(x)$ using an $\epsilon$-greedy policy over $Q_{1}$, and (ii) sample a candidate kernel $y_t \sim G_\theta(\cdot \mid x,c_t)$.

\textbf{Reward and update.}
We use a binary feasibility reward
\begin{equation}
r_{1,t}=
\begin{cases}
+1, & \text{if } g_{\text{feas}}(o_t)=1,\\
-1, & \text{otherwise},
\end{cases}
\end{equation}
where $o_t=V(x,y_t)$ and $g_{\text{feas}}$ is the combined feasibility gate (Section~\ref{sec:verifier}).
After receiving feedback, we update the values of retrieved entries $m \in c_t$ using Eq.~\ref{eq:unified_update} with $r=r_{1,t}$ and store the generated code together with verifier feedback into memory. This process repeats until a feasible kernel is found or the budget is exhausted.

\begin{table*}[!t]
\centering

\caption{Compilation Rate (CR) and Correctness (Acc) across difficulty levels, shown as \textbf{(Round 1) Final}, respectively representing the start point and the final performance. Notably, the huge gap between GPT-5.2 and other models implies that frontier LLMs with stronger in-context learning capability benefit substantially more from experience-driven methods. }
    \label{tab:main_results}

    \setlength{\tabcolsep}{4pt}
    \renewcommand{\arraystretch}{1.5}

\begin{tabular}{ll rr rr rr}
      \toprule
\multirow{2}{*}{\textbf{Model}} & \multirow{2}{*}{\textbf{Method}} & \multicolumn{2}{c}{\textbf{Level 1}} & \multicolumn{2}{c}{\textbf{Level 2}} & \multicolumn{2}{c}{\textbf{Overall}} \\
\cmidrule(lr){3-4} \cmidrule(lr){5-6} \cmidrule(lr){7-8}
& & \textbf{CR (\%)} & \textbf{Acc (\%)} & \textbf{CR (\%)} & \textbf{Acc (\%)} & \textbf{CR (\%)} & \textbf{Acc (\%)} \\
      \midrule
\multirow{3}{*}{Qwen3-Coder-30B}
& Pass@$k$ & (22.0) 30.0 & (7.0) 8.0 & (0.0) 2.0 & (0.0) 0.0 & (11.0) 16.0 & (3.5) 4.0 \\
& Refinement & (13.0) 22.0 & (2.0) 6.0 & (0.0) 1.0 & (0.0) 0.0 & (6.5) 11.5 & (1.0) 3.0 \\
& \textbf{Ours} & \textbf{(25.0) 33.0} & \textbf{(6.0) 11.0} & \textbf{(1.0) 3.0} & (0.0) 0.0 & \textbf{(13.0) 18.0} & \textbf{(3.0) 5.5} \\
      \midrule
\multirow{3}{*}{DeepSeek-V3.2}
& Pass@$k$ & (21.0) 33.0 & (7.0) 9.0 & (1.0) 13.0 & (0.0) 0.0 & (11.0) 23.0 & (3.5) 4.5 \\
& Refinement & \textbf{(16.0) 44.0} & (0.0) 12.0 & \textbf{(2.0) 26.0} & (0.0) 0.0 & \textbf{(9.0) 35.0} & (0.0) 6.0 \\
& \textbf{Ours} & (9.0) 39.0 & \textbf{(2.0) 19.0} & (1.0) 19.0 & (0.0) 0.0 & (5.0) 29.0 & \textbf{(1.0) 9.5} \\
      \midrule
\multirow{4}{*}{GPT-5.2}
& Pass@$k$ & (24.0) 36.0 & (9.0) 19.0 & (2.0) 13.0 & (1.0) 3.0 & (13.0) 24.5 & (5.0) 11.0 \\
& Refinement & (19.0) 88.0 & (7.0) 41.0 & (2.0) 55.0 & (1.0) 3.0 & (10.5) 71.5 & (4.0) 22.0 \\
& Codex & (34.0) 82.0 & (16.0) 70.0 & (16.0) 84.0 & (0.0) 22.0 & (25.0) 83.0 & (8.0) 46.0 \\
& \textbf{Ours} & \textbf{(20.0) 97.0} & \textbf{(7.0) 90.0} & \textbf{(2.0) 100.0} & \textbf{(1.0) 76.0} & \textbf{(11.0) 98.5} & \textbf{(4.0) 83.0} \\
      \bottomrule
    \end{tabular}
  \end{table*}

\subsection{Stage 2: Continual Refining}
\label{sec:stage2}
Once a feasible kernel is obtained, the focus shifts from feasibility to \emph{latency reduction}.
We maintain a set of \emph{optimization start points}  $\mathcal{P}(x)$, initialized with the successful draft from Stage~1 and augmented online as new feasible variants are discovered.
At each iteration, based on the current state $s_t$, we retrieve the available start points from the memory $\mathcal{M}$ and select a start point using $Q_2$.

With the selected start point and the current state, we then retrieve additional contextual information that contains optimization traces, best practices, and information about its observable child nodes to support the refinement process. In the upgraded system, this refinement context is further conditioned on profiler-derived bottleneck diagnoses, which are used to retrieve bottleneck-matched optimization examples and complementary high-performing variants.
Using the selected start point and the retrieved context in $c_t$, the generator samples a refined result.

\textbf{Relative reward, normalization, and update.}
To drive performance optimization, we define reward relative to the best-so-far latency $b_{t}$ tracked in $\xi_{t}$:
\begin{equation}
r_{2,t} =
\begin{cases}
-1, & \text{if } g_{\text{feas}}(o_t)=0,\\
\tanh\!\left(\log b_{t} - \log \ell_{\text{lat}}(o_t)\right), & \text{otherwise}.
\end{cases}
\end{equation}
We apply PopArt-style online normalization $\hat{r}_{2,t} = (r_{2,t}-\mu_2)/\sigma_2$ using running estimates $(\mu_2,\sigma_2)$.
We update $Q_{2}$ for both the start point $p_t$ and retrieved entities $z \in c_t$ using Eq.~\ref{eq:unified_update} with $r=\hat{r}_{2,t}$.
When a refined kernel is feasible, as indicated by $g_{\text{feas}}(o_t)=1$, we store the kernel together with verifier feedback in memory for future retrieval and add it to the start set $\mathcal{P}(x)$ to expand the refinement search space.

\subsection{Multi-gate Verification}
\label{sec:verifier}
The verifier $V$ acts as the environment interface, providing robust feedback to guide the RL process. Given a task $x$ and a generated kernel $y_t$, it returns a structured outcome
\begin{equation}
o_t = V(x, y_t) = (g_{\text{hack}}, g_{\text{comp}}, g_{\text{corr}}, \ell_{\text{lat}}),
\end{equation}
where $g_{\text{hack}}, g_{\text{comp}}, g_{\text{corr}} \in \{0,1\}$ denote the anti-hacking, compilation, and correctness gates, and $\ell_{\text{lat}} \in \mathbb{R}_{+}$ is the measured latency. A kernel is deemed feasible if and only if:
$g_{\text{feas}}(o_t) \triangleq g_{\text{hack}} \wedge g_{\text{comp}} \wedge g_{\text{corr}}$.

\textbf{Anti-hacking ($g_{\text{hack}}$).}
We implement a two-tier screening process. A rule-based filter first rejects trivial exploits (e.g., using high-level \texttt{torch} APIs or constant-folding shortcuts). Survivors undergo a model-based inspection to identify subtle harness manipulations.

\textbf{Compilation ($g_{\text{comp}}$) \& Correctness ($g_{\text{corr}}$).}
We verify successful compilation under the backend-specific toolchain, instantiated in our main study with the Ascend C toolchain. Correctness is validated by comparing outputs against the PyTorch reference: $\|\mathrm{out}_{y}(x)-\mathrm{ref}(x)\| \le \tau$. The verifier provides fine-grained feedback, including mismatch localization and shape errors (details in Appendix~\ref{app:eval_methodology}).

\textbf{Latency ($\ell_{\text{lat}}$).}
For feasible kernels, we measure on-device execution time using backend-native profiling tools. In the primary Ascend setting, we use \textbf{msprof} and report the mean wall time across 3 profiling passes (Pipe, Memory, Resource) after warm-up. In extended experiments on CUDA, the same loop is instantiated with GPU-native profiling signals.

\section{Experiment}
\label{sec:experiments}
\subsection{Experimental Setup}
\label{sec:exp_setup}
\textbf{Benchmark and Execution.}
We evaluate on L1 and L2 operators from KernelBench~\citep{ouyang2025kernel}. Since KernelBench does not natively support Ascend C, we implement a compilation, deployment, and execution pipeline that maintains full compatibility with KernelBench PyTorch references while enabling the model to generate complete Ascend operator projects.

\textbf{Budget and metric.}
We enforce a strict per-operator budget of $T=30$ iterations across all methods, encompassing both draft generation and iterative refinement. Functional correctness is verified with tolerances of $\texttt{atol}=\texttt{rtol}=10^{-2}$. Our evaluation relies on three primary metrics: (i) \textbf{Compilation Rate (CR)}, which measures the proportion of generated kernels that successfully compile, and (ii) \textbf{Correctness (Acc)}, which reports the percentage of operators for which a functionally valid solution is found within the budget.(iii) \textbf{Speedup} measures the reduction in execution latency, defined as $\text{speedup} = L_{\text{ref}} / L_{\text{opt}}$, where $L_{\text{ref}}$ and $L_{\text{opt}}$ are the latencies of the reference and optimized kernels, respectively.

\textbf{Baselines.}
We compare \ourmethod against three baseline strategies using three models: Qwen3-Coder-30B-A3B-Instruct~\citep{yang2025qwen3}, DeepSeek-V3.2~\citep{liu2025deepseek}, and GPT-5.2. Detailed configurations of these baselines can be found in Appendix~\ref{app:baselines}.
\begin{itemize}
    \item \emph{Pass@$k$}: A stateless baseline generating $K=30$ independent candidates per operator given a single demonstration.
    \item \emph{Refinement}: A stateful agentic loop that iteratively repairs compilation and correctness errors using verifier feedback. Upon finding a valid kernel, it transitions to hill-climbing for latency optimization, subject to a maximum budget of 30 iterations.
    \item \emph{Codex by OpenAI}: An autonomous agent based on GPT-5.2 with direct shell and file system access. It executes a ``try-fail-evolve'' loop, autonomously mutating the implementation based on execution logs until success or a budget of 30 verification attempts is exhausted.
\end{itemize}

\subsection{Main Results}
\label{sec:main_results}
We evaluate \ourmethod under a matched evaluation pipeline, focusing on compilation and correctness, as well as performance optimization after correctness.

\textbf{Compilation and correctness.}
Table~\ref{tab:main_results} reports compilation rate (CR) and correctness (Acc) across two difficulty levels under a fixed budget $T{=}30$.

\ourmethod achieves the strongest overall performance with GPT-5.2, reaching \textbf{98.5\%} CR and \textbf{83.0\%} Acc, substantially outperforming Codex (83.0\% CR, 46.0\% Acc) and Refinement (71.5\% CR, 22.0\% Acc).
On Level~2, \ourmethod attains near-perfect compilation (100\%) with 76\% correctness.
Despite Codex having autonomous shell and file system access, \ourmethod surpasses it by 15.5 points in CR and 37.0 points in Acc.

On weaker backbones, the improvements are more moderate.
\ourmethod achieves the highest Acc on both Qwen3-Coder-30B (5.5\% vs.\ 4.0\%) and DeepSeek-V3.2 (9.5\% vs.\ 6.0\%), with DeepSeek-V3.2 reaching 19\% correctness on Level~1---more than doubling Pass@$k$.
The Refinement baseline attains higher CR on DeepSeek-V3.2 (35.0\% vs.\ 29.0\%), suggesting that value-driven retrieval prioritizes generation quality over compilation attempts.
Critically, Level~2 Acc remains at 0\% for weaker models even when candidates compile (e.g., 19\% CR on DeepSeek-V3.2), indicating that harder operators demand stronger generator capacity.

Examining Round 1 through the final iteration reveals how effectively each method leverages the iterative process.
On GPT-5.2, \ourmethod improves CR from 11.0\% to 98.5\% and Acc from 4.0\% to 83.0\%, representing an order-of-magnitude gain.
In contrast, weaker models show limited improvement: Qwen3-Coder-30B increases Acc by +2.5 points, while DeepSeek-V3.2 improves by +8.5 points.
This disparity reveals a key insight: the in-context learning capabilities of frontier LLMs prove critical for experience-driven approaches like ours.
Crucially, it does not weaken our method's value; instead, it confirms that our agent is keeping pace with the cutting-edge advancements of base models.

\begin{figure}[!htbp]
\centering
\includegraphics[width=\linewidth]{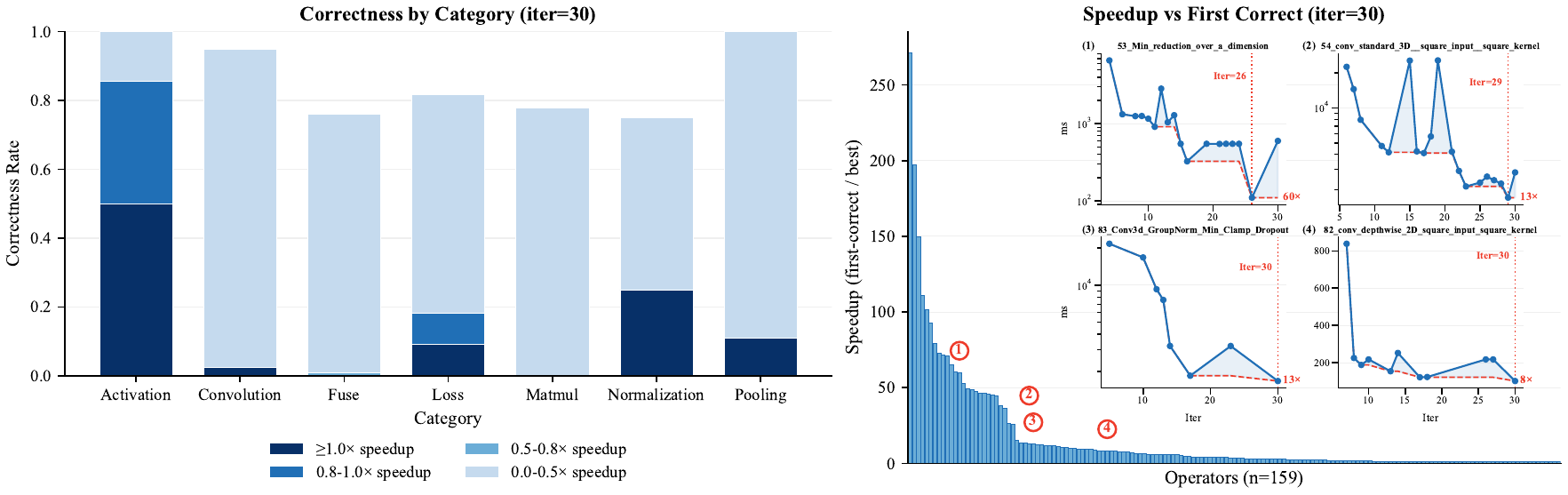}
\caption{Optimization outcomes. \textbf{(Left)} Category-level correctness and speedup distribution at budget $T{=}30$; color segments show the fraction of correct kernels in each speedup tier relative to Torch-NPU. \textbf{(Right)} Within-operator speedup achieved by iterative refinement across 159 operators with $\geq$1 valid optimization candidate beyond the initial correct draft; inset panels detail representative optimization trajectories.}
\label{fig:optimization_outcomes}
\end{figure}

\textbf{Optimization gains: within-operator speedup.}
Conditioned on reaching a correct draft, the refining stage further reduces latency.
For each solved operator, we compare the initial draft, defined as the \emph{first feasible} candidate, to the \emph{best} candidate found within the remaining budget.
This yields a median speedup of \textbf{3.60}$\times$, with an interquartile range of \textbf{1.38}--\textbf{10.05}$\times$.
Although many operators remain slower than Torch-NPU (Figure~\ref{fig:optimization_outcomes}), consistent within-operator gains indicate that the refinement process continues to improve performance beyond correctness. 

Figure~\ref{fig:optimization_outcomes} quantifies these gains across 159 operators with at least one valid optimization candidate beyond the initial correct draft.
The distribution is long-tailed: while many operators exhibit modest improvements ($s \approx 1$--$2\times$), a substantial subset benefits dramatically from continued optimization, with top performers achieving more than $200\times$ speedup over their first correct version.

Inset trajectories for four representative operators confirm that these gains emerge from systematic, incremental improvements across multiple iterations, rather than from single fortuitous generations.

\subsection{Generalization of Value-Driven Memory}
\label{sec:transfer_stream}
A core motivation for our memory design is \emph{reusability}: high-utility past experiences should accelerate learning on subsequent ones. We verify this hypothesis by evaluating transfer across difficulty levels and generator backbones.

\begin{figure}[!htbp]
\centering
\includegraphics[width=\linewidth]{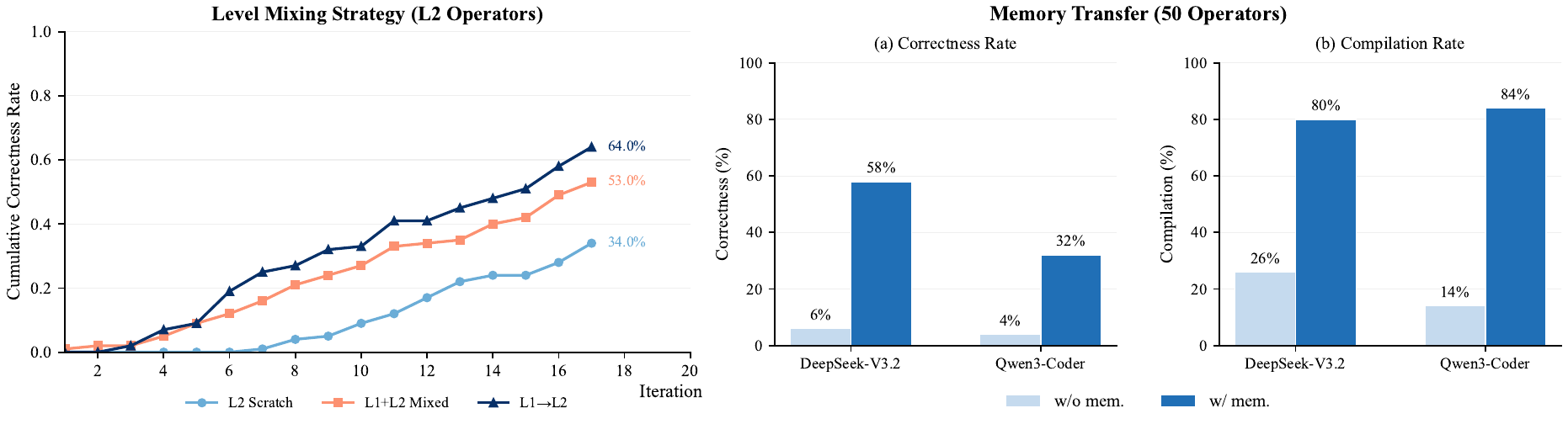}
\caption{Transfer and generalization. \textbf{(Left)} Transfer across difficulty levels: cumulative success rate on L2 under different stream compositions. \textbf{(Right)} Transfer across generator backbones: performance on held-out operators when reusing memory built with GPT-5.2.}
\label{fig:transfer_generalization}
\end{figure}

\textbf{Transfer across difficulty levels.}
We study whether memory accumulated on easier L1 operators transfers to harder L2 operators.
We consider three setups:
\begin{itemize}
  \item \emph{L2 Scratch}: agent iterates from scratch on L2 operators.
  \item \emph{L1+L2 Mixed}: the agent iterates from scratch on a mixed operator set containing both L1 and L2.
  \item \emph{L1 $\rightarrow$ L2}: the agent first iterates on L1, then continues iterating on the L2 operator set initialized with the resulting L1 memory.
\end{itemize}

In Figure~\ref{fig:transfer_generalization} and Table~\ref{tab:transfer_summary}, the \emph{L1 $\rightarrow$ L2} stream exhibits the fastest warm-up and highest final performance. By iteration $t=17$, it achieves 64\% L2 correctness, outperforming \emph{L1+L2 Mixed} (53\%) by 11\% and \emph{L2 Scratch} (34\%) by 30\%. Crucially, the transfer allows the agent to solve its first L2 operator four iterations earlier than the scratch baseline. This confirms that foundational patterns learned from simpler tasks effectively bootstrap progress on harder problems.

\begin{table}[ht]
\centering
\caption{Cross-level transfer summary on L2 at final iteration.}
\label{tab:transfer_summary}
\setlength{\tabcolsep}{12pt}
\begin{tabular}{lcc}
\toprule
\textbf{Setup} & \textbf{CR (\%)} & \textbf{Acc (\%)} \\
\midrule
L2 Scratch & 88.0 & 34.0 \\
L1+L2 Mixed & 98.0 & 53.0 \\
L1$\rightarrow$L2 & 97.0 & 64.0 \\
\bottomrule
\end{tabular}
\end{table}

\textbf{Transfer across generator backbones.}
We further assess whether memory constructed by a strong model (GPT-5.2) can improve the performance of weaker backbones (DeepSeek-V3.2, Qwen3-Coder-30B). We evaluate on a held-out set of 50 operators (30 L1, 20 L2), initializing the agent with a filtered GPT-5.2 memory bank where traces from the test operators are excluded to prevent leakage.

Figure~\ref{fig:transfer_generalization} (right) shows that the learned memory transfers well across generator backbones.
For DeepSeek, adding memory improves compilation from 26\% to 80\% and correctness from 6\% to 58\%.
For Qwen, memory yields a similarly large compilation gain (14\%$\rightarrow$84\%) with a smaller but substantial correctness gain (4\%$\rightarrow$32\%).

Overall, memory appears to provide backbone-agnostic operator constraints and debugging cues that greatly reduce non-compiling attempts, while the remaining compilation--correctness gap (especially for Qwen) suggests semantic validity remains the dominant bottleneck.

\FloatBarrier
\subsection{Beyond KernelBench and CANN}
\label{sec:scaling_out}
The main benchmark in this paper is Ascend C KernelBench, but an important question is whether the learned memory and refinement policy continue to help on workloads that fall outside this training distribution. To test this, we evaluate EvoKernel on the Attention Set operator suites and on \emph{mHC} kernels~\citep{xie2025mhc} derived from recent DeepSeek architectures.

\begin{table*}[!htbp]
\centering
\caption{Initial scaling-out results beyond the main KernelBench study.}
\label{tab:scaling_out}
\setlength{\tabcolsep}{8pt}
\renewcommand{\arraystretch}{1.15}
\begin{tabular}{lccccl}
\toprule
\textbf{Workload} & \textbf{Platform} & \textbf{\# Ops} & \textbf{CR (\%)} & \textbf{Acc (\%)} & \textbf{Fast$_1$ Ratio (\%)} \\
\midrule
Attention Set & CUDA & 70 & 100.0 & 97.1 & 72.1 \\
Attention Set & Ascend & 70 & 100.0 & 78.6 & 21.8 \\
KernelBench & CUDA & 250 & 100.0 & 100.0 & 68.0 \\
mHC Kernels (DeepSeek) & Ascend & 15 & 86.7 & 66.7 & 60.0 \\
\bottomrule
\end{tabular}
\end{table*}

\textbf{Attention Set operators.}
On CUDA, EvoKernel scales cleanly from the KernelBench-style setting to the Attention Set workloads. On the 70-operator Attention Set (excluding non-attention operators such as GEMM, RoPE, and Router), the system reaches 100\% compilation and 97.1\% correctness after 30 outer iterations. On the same Attention Set on CUDA with KernelBench operators included, \ourmethod achieves 100\% compilation and 100\% correctness on all 250 operators. These results indicate that the memory mechanism transfers beyond the operator families emphasized in the main benchmark and remains effective on more application-driven kernels. Figure~\ref{fig:attention_cuda_combined} shows the optimization timeline and performance comparison for the CUDA Attention Set.

\textbf{Ascend Attention Set and DeepSeek mHC kernels.}
More importantly for the cold-start setting emphasized in this paper, the same methodology also transfers to new Ascend C workloads outside the original KernelBench distribution. On the 70-operator Ascend Attention Set, EvoKernel reaches 100.0\% compilation and 78.6\% correctness after 30 iterations. We further evaluate on 15 mHC kernels targeting a recent DeepSeek architectural motif on Ascend (CANN~8.5.0). EvoKernel obtains 10 correct implementations, and 6 of these outperform the PyTorch baseline. Representative wins include \texttt{SinkhornKnopp} with 41.96$\times$ speedup, \texttt{OrthostochasticProject} with 2.94$\times$, and \texttt{MhcPostBlock} with 2.88$\times$. Figure~\ref{fig:mhc_combined} shows the optimization timeline and per-operator performance for all 15 mHC kernels over 30 iterations (merged across three experiment series).

\begin{figure*}[!htbp]
\centering
\includegraphics[width=\textwidth]{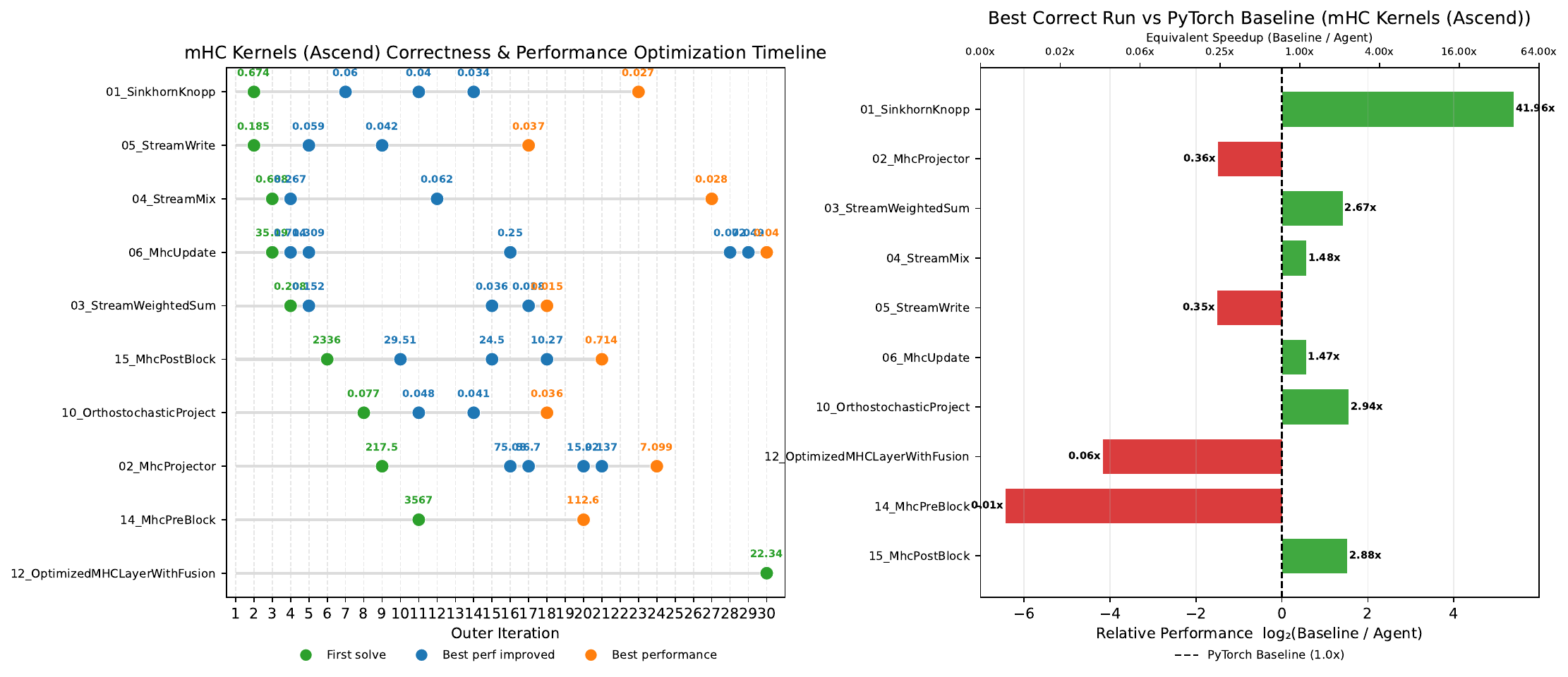}
\caption{mHC Kernels (Ascend): Optimization timeline and performance vs.\ Torch-NPU baseline for 15 DeepSeek mHC operators over 30 iterations (merged across three experiment series). \textbf{(Left)} Correctness and performance optimization timeline. \textbf{(Right)} Best correct run vs.\ baseline in log$_2$ speedup.}
\label{fig:mhc_combined}
\end{figure*}

Taken together, these scaling-out results suggest that EvoKernel is not simply memorizing KernelBench operator templates. Instead, the framework appears able to reuse memory and profiling-guided refinement to adapt to both new operator families and new architectural motifs, while still preserving the paper's primary emphasis on data-scarce Ascend C kernel generation.

\FloatBarrier
\subsection{Ablations}
\label{sec:ablations}
\subsubsection{Value-Driven versus Heuristic-Driven Retrieval}
\label{sec:ablation_qvalue}
We assess the impact of learned value estimates by comparing our full value-driven pipeline against a heuristic-driven variant. Both settings use the \emph{L1 $\rightarrow$ L2} transfer protocol (Section~\ref{sec:transfer_stream}) and run for 30 L2 iterations per operator, inheriting the same L1 memory. The only difference lies in the selection mechanism:
\begin{itemize}
    \item \textbf{Value-Driven (Ours):} Selects context and optimization start points using $\epsilon$-greedy over learned $Q$-values.
    \item \textbf{Heuristic-Driven:} Selects context based solely on semantic similarity and chooses optimization start points based on the highest historical performance.
\end{itemize}

Figure~\ref{fig:retrieval_ablations} (left) tracks cumulative correctness and compilation rates. While both methods perform similarly in the early stages (reaching 48\% correctness by iteration 14), the value-driven approach diverges significantly thereafter. By iteration 30, it achieves 77\% correctness and 100\% compilation, compared to 67\% and 97\% for the heuristic baseline. 
This indicates that while heuristics suffice for initial bootstrapping, learned value estimates provide a crucial exploitation signal for solving the long tail of difficult operators.

\begin{figure}[!htbp]
\centering
\includegraphics[width=\linewidth]{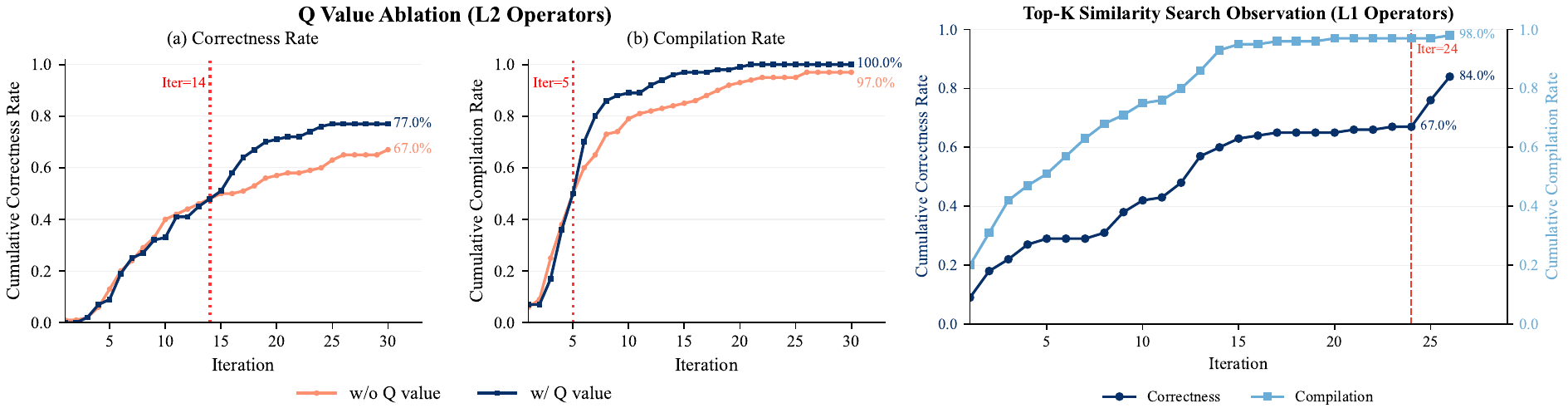}
\caption{Retrieval ablations. \textbf{(Left)} Value-driven vs.\ heuristic retrieval on L2 operators (same L1 memory and $\epsilon$-greedy schedule). \textbf{(Right)} Effect of increasing retrieval pool size $K$ at iteration 24; cumulative correctness and compilation rates on L1 operators.}
\label{fig:retrieval_ablations}
\end{figure}

\subsubsection{Multi-Task Memory Sharing versus Per-Task Refinement}
To isolate the contribution of cross-task memory sharing, we compare \ourmethod against the \emph{Refinement} baseline under identical per-operator iteration budgets (Table~\ref{tab:main_results}).
Refinement can be viewed as a degenerate instance of our framework: restricting the memory bank to a single operator eliminates cross-task retrieval, reducing the agent to iterative self-refinement.
This controlled ablation thus directly quantifies the benefit of a \emph{global}, shared memory bank over per-task isolated iteration.

Results reveal that cross-task sharing yields substantial gains, particularly on Level~2 operators.
With GPT-5.2, \ourmethod raises the Level~2 compilation rate from 55.0\% to 100.0\% and accuracy from 3.0\% to 76.0\%. Level~1 exhibits more moderate improvements (+9\,pp CR, +49\,pp Acc).
These findings indicate that, although within-operator refinement provides a useful signal, the ability to transfer experience across tasks confers additional, complementary benefits that isolated iteration cannot achieve.

\FloatBarrier
\subsection{Discussion}
\label{sec:discussion}
\textbf{Explicit versus Emergent Curricula.}
Our results demonstrate that value-driven memory induces \emph{adaptive curriculum learning} without explicit task ordering. 
When we impose an explicit L1$\rightarrow$L2 curriculum (Table~\ref{tab:transfer_summary}), the agent benefits from a warm start, as L1 memory acts as \emph{foundational scaffolding} that accelerates early L2 progress despite the complexity gap. 
Crucially, however, even under a \emph{L1+L2 Mixed} setting with no prescribed ordering, the retrieval policy autonomously reconstructs a \emph{soft curriculum}. 
Figure~\ref{fig:operator_dependency} exemplifies this emergent behavior for \texttt{36\_RMSNorm\_} within the mixed setting: the agent first solves simpler operators, which then serve as retrieved references to facilitate the solution of harder ones, naturally forming a dependency chain without manual intervention.

\textbf{Scaling to out-of-distribution workloads.}
The additional results in Table~\ref{tab:scaling_out} strengthen this interpretation. The framework transfers not only across difficulty levels within KernelBench, but also to workloads that differ materially from the main training distribution: the Attention Set and recent DeepSeek mHC kernels. In particular, the Ascend mHC results indicate that the system can reuse accumulated memories to tackle new architectural motifs rather than only variants of benchmark operators. The Ascend Attention Set results point in the same direction, with 78.6\% correctness on 70 operators, suggesting that the agent's benefit persists even when the workload shifts toward application-driven kernels.

  \begin{figure}[!htbp]
  \centering
  \includegraphics[width=0.68\linewidth]{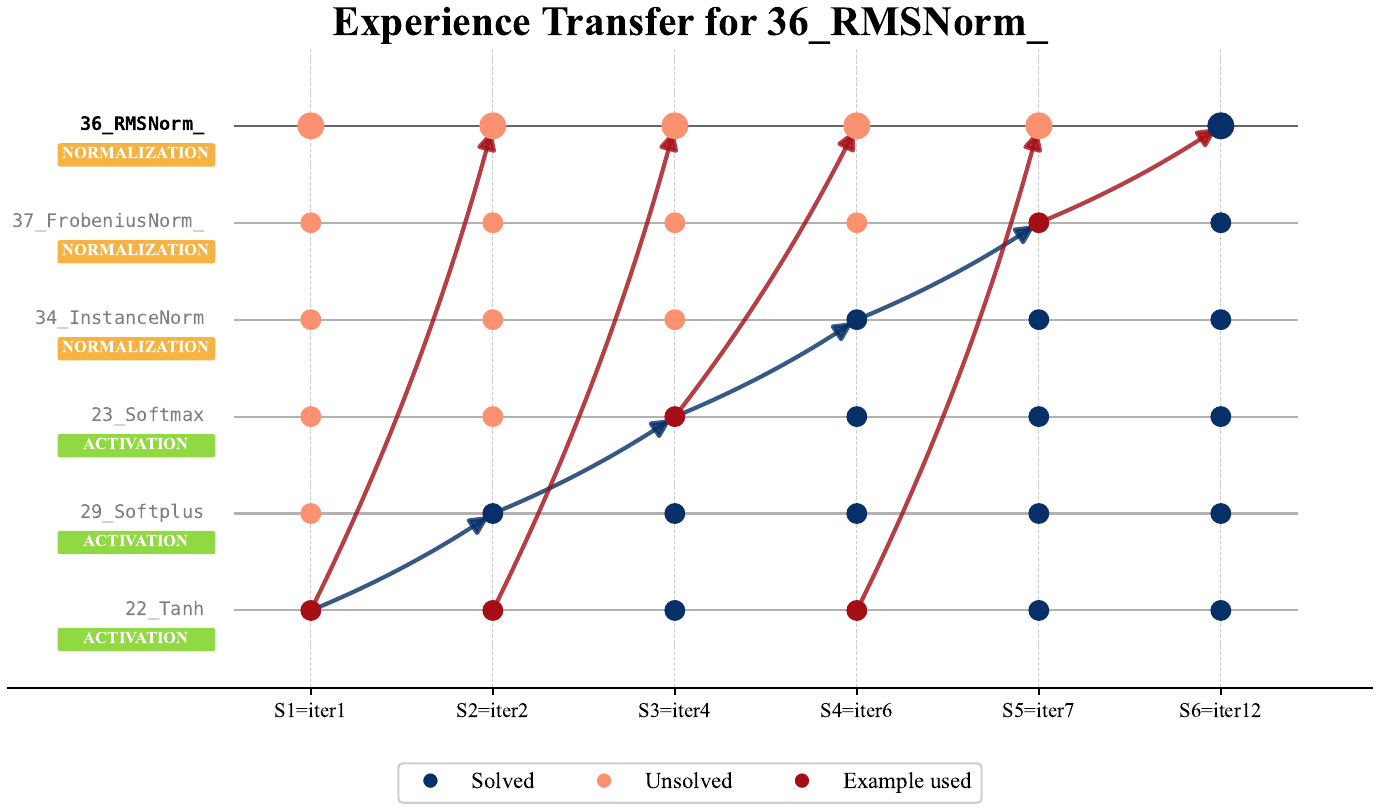}
  \caption{Experience transfer dependency graph of \texttt{36\_RMSNorm\_}. Arrows trace causal references at first-solve iterations, revealing an emergent curriculum from simple to complex operators.}
  \label{fig:operator_dependency}
  \end{figure}

  \textbf{Why value-driven memory outperforms stateless baselines.}
  Pass@$k$ sampling treats each generation independently, forfeiting any cross-attempt learning. Iterative refinement (e.g., Codex) accumulates feedback within a single operator but discards it afterward, preventing cross-operator transfer. In contrast, our approach persists and values experiences across both attempts and tasks, enabling the agent to bootstrap harder problems from easier ones and to amortize debugging effort across the entire operator population.

  \textbf{Impact of candidate pool size.}
  In our experiments, the candidate pool size $|\mathcal{C}(x)|$ is controlled by a multiplier $\lambda$ applied to the final retrieval count $N$.
  Smaller candidate pools risk missing valuable context, while larger ones may introduce noise and dilute the signal from high-value entries. Initially, we set $\lambda=2$, resulting in a convergence point with 67\% correctness. Upon increasing $\lambda$ by a factor of 15, correctness improved sharply to 84\% by iteration 26. This suggests that dynamically expanding the candidate pool during training allows the Q-value policy to discover previously overlooked high-utility entries. The optimal multiplier remains an important area for future research, as there is likely a sweet spot that balances coverage and efficiency. In our experiments, we observed that gradually increasing $\lambda$ allowed for a controlled replacement of context, ultimately improving model performance.

\FloatBarrier
\section{Conclusion and Future Work}

We presented \ourmethod, a value-driven memory agent addressing cold-start kernel synthesis by learning stage-specific Q-values for retrieval over a self-evolving memory bank. A central insight is that frontier LLMs have enhanced \emph{in-context learning} capabilities, enabling effective generalization from retrieved demonstrations even in cold-start kernel synthesis scenarios. This emergent ability makes memory-based, non-parametric approaches practically viable. Our additional scaling results on the Attention Set and recent DeepSeek mHC kernels further suggest that the learned memory is not confined to the original KernelBench distribution. More broadly, the value-driven memory paradigm may benefit other cold-start domains with binary verification signals, and we anticipate that as LLMs continue to improve, memory-augmented approaches will enable autonomous mastery of an ever-wider range of specialized tasks.
Beyond technical gains, these results suggest value-driven memory can democratize data-scarce programming expertise (e.g., NPU kernel synthesis), helping bridge expert shortages as hardware diversifies and pointing toward AI systems that adapt to new domains with minimal data.
Potential future work includes extending the framework to other emerging DSLs to verify cross-architecture universality, exploring knowledge distillation to reduce reliance on large commercial models, and incorporating denser reward signals to improve sample efficiency.

\newpage
\bibliographystyle{iclr2026_conference}
\bibliography{main}

\newpage
\appendix

  \section{Proofs for Value Update Stability and Convergence}
  \label{app:proofs}

  This appendix establishes theoretical guarantees for the value-driven memory system introduced in Section~\ref{sec:memory_value}. We prove three results: (1) boundedness of value iterates under bounded rewards, (2) stability of online reward normalization, and (3) convergence of the bandit-style update rule. Together, these lemmas ensure that the retrieval policy remains well-behaved throughout the agent's lifetime.

  \subsection{Notation and Setup}

  Fix a memory entry $i$ and a stage $s \in \{\texttt{draft}, \texttt{optimize}\}$. Each time entry $i$ is retrieved, the system observes a scalar reward $R_t$. The \emph{bandit-style} value update is
  \begin{equation}
  Q_{t+1} = Q_t + \alpha_t (R_t - Q_t) = (1 - \alpha_t) Q_t + \alpha_t R_t, \quad \alpha_t \in (0, 1].
  \label{eq:app_bandit_update}
  \end{equation}
  This is the standard incremental mean estimator used throughout reinforcement learning~\citep{sutton2018reinforcement}.

  \textbf{Reward definitions by stage.}
  (i)~\textbf{Draft stage:} Binary reward $R_t \in \{+1, -1\}$ based on feasibility.
  (ii)~\textbf{Optimize stage:} Given speedup ratio $\rho_t > 0$, the raw reward is $r_{\text{raw},t} = \tanh(\log \rho_t) \in (-1, 1)$. Optionally, we apply z-score normalization: $R_t = (r_{\text{raw},t} - \mu_{t-1}) / \sigma_{t-1}$.

  \subsection{Boundedness of Value Iterates}

  \begin{lemma}[Bounded Rewards Imply Bounded Values]
  \label{lem:boundedness}
  Suppose $|R_t| \le R_{\max}$ for all $t$ almost surely, and $\alpha_t \in (0, 1]$. If $Q_0 \in [-R_{\max}, R_{\max}]$, then $Q_t \in [-R_{\max}, R_{\max}]$ for all $t$.
  \end{lemma}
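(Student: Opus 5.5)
The plan is induction on $t$, using that the update in \eqref{eq:app_bandit_update} is a convex combination. The base case is the hypothesis $Q_0 \in [-R_{\max}, R_{\max}]$. For the inductive step, assume $Q_t \in [-R_{\max}, R_{\max}]$ and write $Q_{t+1} = (1-\alpha_t) Q_t + \alpha_t R_t$.

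Since $\alpha_t \in (0,1]$, both weights $1-\alpha_t$ and $\alpha_t$ are nonnegative and sum to one. Hence $Q_{t+1}$ lies in the convex hull of $\{Q_t, R_t\}$. Both points lie in the interval $[-R_{\max}, R_{\max}]$: $Q_t$ by the inductive hypothesis, and $R_t$ almost surely by assumption. The interval is convex, so $Q_{t+1}$ lies in it as well.

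If an explicit inequality is preferred, the triangle inequality gives
\[
|Q_{t+1}| \le (1-\alpha_t)|Q_t| + \alpha_t |R_t| \le (1-\alpha_t)R_{\max} + \alpha_t R_{\max} = R_{\max}.
\]

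There is no real obstacle here. The only points needing care are these:
\begin{itemize}
\item The condition $\alpha_t \le 1$ is used essentially, since it keeps the weight $1-\alpha_t$ nonnegative.
\item The statement holds almost surely. Intersecting the countably many probability-one events $\{|R_t|\le R_{\max}\}$ still gives a probability-one event, and on that event the induction runs pathwise for every $t$.
\end{itemize}

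Finally, the lemma applies directly to the draft stage with $R_{\max}=1$. It also applies to the raw optimize-stage reward, since $\tanh(\cdot)\in(-1,1)$. For the normalized reward, a bound on $R_t$ must be supplied separately, which is presumably the role of the next result on normalization stability.
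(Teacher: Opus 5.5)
Your proof is correct and follows the paper's argument: induction on $t$, using that $Q_{t+1} = (1-\alpha_t)Q_t + \alpha_t R_t$ is a convex combination of two points in $[-R_{\max}, R_{\max}]$. One small correction to your closing aside: in the paper the bound on the normalized optimize-stage reward comes from Remark~\ref{rem:zscore} (a variance floor giving $|R_t| \le 2/\sigma_{\min}$, or output clipping), not from the normalization-stability lemma, which only proves that the running statistics converge.
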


  \begin{proof}
  By induction. The update $Q_{t+1} = (1 - \alpha_t) Q_t + \alpha_t R_t$ is a convex combination of $Q_t$ and $R_t$. If both lie in $[-R_{\max}, R_{\max}]$, so does $Q_{t+1}$.
  \end{proof}

  \begin{corollary}[Boundedness of Raw Optimization Reward]
  \label{cor:tanh_bounded}
  For any $\rho_t > 0$, we have $r_{\text{raw},t} = \tanh(\log \rho_t) \in (-1, 1)$.
  \end{corollary}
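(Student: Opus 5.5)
The claim reduces to the range of $\tanh$ on the real line. Since $\rho_t>0$, the quantity $u_t := \log \rho_t$ is a well-defined real number, with no restriction on its sign or size. It therefore suffices to show that $\tanh(u)\in(-1,1)$ for every $u\in\mathbb{R}$, and then substitute $u=u_t$.

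For this I will use the explicit form
\begin{equation*}
\tanh(u) = \frac{e^{u}-e^{-u}}{e^{u}+e^{-u}}.
\end{equation*}
The denominator is strictly positive, so $|\tanh(u)|<1$ is equivalent to $|e^{u}-e^{-u}| < e^{u}+e^{-u}$. This follows from the strict triangle inequality for two strictly positive numbers $a=e^{u}$ and $b=e^{-u}$: we have $|a-b| < a+b$ whenever $\min(a,b)>0$. Concretely, $a+b-(a-b)=2b>0$ and $a+b+(a-b)=2a>0$. Hence $-1<\tanh(u)<1$. An equivalent route is to write $\tanh(u) = 1-\frac{2}{e^{2u}+1}$ and note that $\frac{2}{e^{2u}+1}\in(0,2)$.

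I would also record the closed form $\tanh(\log\rho)=\frac{\rho^2-1}{\rho^2+1}$, which gives the bound at a glance. It further shows that the reward is monotone in $\rho$ and tends to $\pm1$ only asymptotically, as $\rho\to\infty$ or $\rho\to 0^+$. This explains why the endpoints are excluded.

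There is no real obstacle here. The only care needed is to keep the interval open, which comes from the strict positivity of $e^{\pm u}$. With the bound in hand, the corollary feeds directly into Lemma~\ref{lem:boundedness} with $R_{\max}=1$ for the unnormalized optimization reward. This is its intended role before the normalization stability result.
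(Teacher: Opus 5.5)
Your proof is correct and takes the same approach as the paper: since $\rho_t>0$, $\log\rho_t$ is a real number, and $\tanh$ maps $\mathbb{R}$ into $(-1,1)$. The paper simply cites the range of $\tanh$, whereas you verify it explicitly via $|e^{u}-e^{-u}|<e^{u}+e^{-u}$ and the closed form $\frac{\rho^2-1}{\rho^2+1}$, which is extra detail rather than a different route.
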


  \begin{proof}
  Since $\rho_t > 0$, $\log \rho_t \in \mathbb{R}$, and $\tanh: \mathbb{R} \to (-1, 1)$.
  \end{proof}

  \begin{remark}[Z-Score Normalization Requires Safeguards]
  \label{rem:zscore}
  The z-score transformation $R_t = (r_{\text{raw},t} - \mu_{t-1}) / \sigma_{t-1}$ can be unbounded when $\sigma_{t-1} \to 0$. We ensure boundedness via either: (i) a variance floor $\hat{\sigma}_{t-1} := \max\{\sigma_{t-1}, \sigma_{\min}\}$, yielding $|R_t| \le 2/\sigma_{\min}$; or (ii) output clipping $R_t := \mathrm{clip}(R_t; -B, B)$.
  \end{remark}

  \begin{remark}[Error Clipping Alone Is Insufficient]
  \label{rem:td_clip}
  An alternative update $Q_{t+1} = Q_t + \alpha \cdot \mathrm{clip}(R_t - Q_t; -C, C)$ bounds the per-step change but not the iterates themselves. If $R_t \equiv M \gg Q_0$, then $Q_t = Q_0 + t\alpha C \to \infty$. Hence, reward boundedness (Lemma~\ref{lem:boundedness}) is essential.
  \end{remark}

  \subsection{Stability of Online Normalization}

  \begin{lemma}[Convergence of Running Statistics]
  \label{lem:popart}
  Let $\{r_{\text{raw},t}\}_{t \ge 1}$ be a strictly stationary ergodic process with $\mathbb{E}[r_{\text{raw},1}^2] < \infty$ and $\mathrm{Var}(r_{\text{raw},1}) = \sigma^2 > 0$. Define
  \begin{equation}
  \mu_t := \frac{1}{t} \sum_{k=1}^{t} r_{\text{raw},k}, \quad \sigma_t := \sqrt{\frac{1}{t} \sum_{k=1}^{t} (r_{\text{raw},k} - \mu_t)^2}.
  \end{equation}
  Then $\mu_t \to \mu := \mathbb{E}[r_{\text{raw},1}]$ and $\sigma_t \to \sigma$ almost surely. Moreover, the normalization map $f_t(r) := (r - \mu_t)/\sigma_t$ converges uniformly on bounded sets to $f_\infty(r) := (r - \mu)/\sigma$.
  \end{lemma}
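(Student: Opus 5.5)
The plan is to derive all three claims from Birkhoff's ergodic theorem, followed by elementary algebra.

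\textbf{Running mean.} Since $\{r_{\text{raw},t}\}$ is strictly stationary and ergodic with $\mathbb{E}|r_{\text{raw},1}| \le (\mathbb{E}[r_{\text{raw},1}^2])^{1/2} < \infty$, the ergodic theorem gives $\mu_t \to \mu$ almost surely.

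\textbf{Running standard deviation.} The process $\{r_{\text{raw},t}^2\}$ is a measurable function of the same stationary ergodic sequence, so it is again stationary and ergodic. It is integrable by the second-moment assumption. A second application of the ergodic theorem therefore yields
\[
\frac{1}{t}\sum_{k=1}^t r_{\text{raw},k}^2 \to \mathbb{E}[r_{\text{raw},1}^2]
\]
almost surely. We then use the identity
\[
\sigma_t^2 = \frac{1}{t}\sum_{k=1}^t r_{\text{raw},k}^2 - \mu_t^2,
\]
which follows by expanding the square. On the intersection of the two probability-one events, continuity gives $\sigma_t^2 \to \mathbb{E}[r_{\text{raw},1}^2] - \mu^2 = \sigma^2$. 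Continuity of the square root then gives $\sigma_t \to \sigma$. The only care needed is to work on a single full-measure event where both limits hold; intersecting two probability-one events achieves this.

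\textbf{Uniform convergence of the normalization map.} Fix a bounded set $B \subset [-M, M]$ and a sample path in the full-measure event. Because $\sigma > 0$ and $\sigma_t \to \sigma$, there is a $t_0$ such that $\sigma_t \ge \sigma/2$ for all $t \ge t_0$. For $t \ge t_0$, $\sigma_t$ is bounded away from zero, so $f_t$ is well defined, which also handles the degenerate early times when $\sigma_t$ may vanish. For $t \ge t_0$ and $r \in B$, we split
\[
f_t(r) - f_\infty(r) = (r-\mu_t)\Big(\frac{1}{\sigma_t}-\frac{1}{\sigma}\Big) + \frac{\mu-\mu_t}{\sigma}.
\]
This gives the bound
\[
\sup_{|r|\le M} |f_t(r)-f_\infty(r)| \le (M + |\mu_t|)\,\frac{|\sigma-\sigma_t|}{\sigma_t\sigma} + \frac{|\mu_t-\mu|}{\sigma}.
\]
Since $|\mu_t|$ is eventually bounded and $\sigma_t \ge \sigma/2$, both terms tend to zero almost surely.

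\textbf{Main obstacle.} The argument is mostly bookkeeping. The one substantive point is this uniformity step, where keeping $\sigma_t$ bounded away from zero via $\sigma > 0$ is essential; this is exactly the issue that Remark~\ref{rem:zscore} addresses in finite time.
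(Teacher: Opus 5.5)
Your proposal is correct and follows the paper's proof essentially step for step: the ergodic theorem for $\mu_t$ and for the second moments, the identity $\sigma_t^2 = \frac{1}{t}\sum_k r_{\text{raw},k}^2 - \mu_t^2$ with continuity of the square root, and a two-term splitting of $f_t - f_\infty$. The only difference in the splitting is immaterial: yours factors out $(r-\mu_t)$ and divides the mean error by $\sigma$ instead of $\sigma_t$. Your explicit $t_0$ with $\sigma_t \ge \sigma/2$ and the intersection of the two full-measure events are small rigor points that the paper leaves implicit.
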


  \begin{proof}
  By the ergodic theorem, $\mu_t \to \mu$ a.s. Writing $\sigma_t^2 = \frac{1}{t}\sum_k r_{\text{raw},k}^2 - \mu_t^2$ and applying ergodicity to both terms gives $\sigma_t^2 \to \sigma^2$ a.s. Continuity of $\sqrt{\cdot}$ on $(0,\infty)$ yields $\sigma_t \to \sigma$. For uniform convergence on a bounded set $J$:
  \[
  |f_t(r) - f_\infty(r)| \le \frac{|\mu - \mu_t|}{\sigma_t} + |r - \mu| \cdot \left|\frac{1}{\sigma_t} - \frac{1}{\sigma}\right| \to 0
  \]
  uniformly on $J$ since $\sigma_t \to \sigma > 0$.
  \end{proof}

  \begin{remark}[Relation to PopArt]
  PopArt~\citep{hessel2019multi} rescales network outputs when $(\mu, \sigma)$ change to preserve unnormalized predictions. Our scheme omits this rescaling; Lemma~\ref{lem:popart} shows the weaker but sufficient result that the normalization map stabilizes asymptotically.
  \end{remark}

  \subsection{Convergence of the Bandit Update}

  We analyze two regimes: constant step size (tracking) and decreasing step size (convergence).

  \begin{lemma}[Constant Step Size: EMA Dynamics]
  \label{lem:ema}
  Let $\{R_t\}$ be i.i.d.\ with mean $\mu$ and variance $\sigma_R^2 < \infty$. Under constant $\alpha \in (0,1)$:
  \begin{enumerate}[label=(\roman*), leftmargin=1.5em]
      \item $\mathbb{E}[Q_t] \to \mu$ as $t \to \infty$.
      \item $\mathrm{Var}(Q_t) \to \frac{\alpha}{2-\alpha} \sigma_R^2 = O(\alpha)$ for small $\alpha$.
      \item $\{Q_t\}$ converges in distribution to a unique stationary distribution centered at $\mu$.
  \end{enumerate}
  \end{lemma}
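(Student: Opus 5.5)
The plan is to unroll the recursion in \eqref{eq:app_bandit_update} with constant step size. Iterating $Q_{t+1} = (1-\alpha)Q_t + \alpha R_t$ gives
\begin{equation*}
Q_t = (1-\alpha)^t Q_0 + \alpha \sum_{k=0}^{t-1} (1-\alpha)^{t-1-k} R_k ,
\end{equation*}
a deterministic transient plus an exponentially weighted sum of i.i.d.\ rewards. We treat $Q_0$ as deterministic, or independent of $\{R_t\}$ with finite second moment. Every part of the statement will be read off this representation.

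For (i), we take expectations and use $\mathbb{E}[R_k]=\mu$ together with the geometric sum $\alpha\sum_{j=0}^{t-1}(1-\alpha)^j = 1-(1-\alpha)^t$. This gives $\mathbb{E}[Q_t] = \mu + (1-\alpha)^t(\mathbb{E}[Q_0]-\mu)$, which tends to $\mu$ because $|1-\alpha|<1$.

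For (ii), independence of the $R_k$ (and of $Q_0$) gives
\begin{equation*}
\mathrm{Var}(Q_t) = (1-\alpha)^{2t}\mathrm{Var}(Q_0) + \alpha^2\sigma_R^2 \sum_{j=0}^{t-1}(1-\alpha)^{2j}.
\end{equation*}
This converges to $\alpha^2\sigma_R^2/(1-(1-\alpha)^2)$. Since $1-(1-\alpha)^2 = \alpha(2-\alpha)$, the limit equals $\frac{\alpha}{2-\alpha}\sigma_R^2$, which is $O(\alpha)$. An equivalent route uses the fixed-point recursion $v_{t+1} = (1-\alpha)^2 v_t + \alpha^2\sigma_R^2$ and solves for its fixed point.

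For (iii), the main work is to identify the limit law and show it is unique. Because the $R_k$ are i.i.d., the vector $(R_0,\dots,R_{t-1})$ has the same law as its reversal. Hence $Q_t$ is equal in distribution to
\begin{equation*}
\tilde Q_t := (1-\alpha)^t Q_0 + \alpha\sum_{j=0}^{t-1}(1-\alpha)^j R_j .
\end{equation*}
The series $S=\alpha\sum_{j\ge0}(1-\alpha)^j R_j$ converges almost surely and in $L^2$. This follows because the centered partial sums form an $L^2$-bounded martingale, with summable variances $\alpha^2(1-\alpha)^{2j}\sigma_R^2$, and the mean part is a convergent geometric series. The transient term $(1-\alpha)^tQ_0$ tends to $0$. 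Therefore $\tilde Q_t \to S$ almost surely, so $Q_t \Rightarrow \mathcal{L}(S)$.

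Stationarity and uniqueness come from the same structure. The law of $S$ satisfies $S \overset{d}{=} (1-\alpha)S' + \alpha R$ with $R$ independent of $S'$, so $\mathcal{L}(S)$ is invariant for the chain. For uniqueness, couple two copies of the chain started from arbitrary initial laws using the same reward sequence. Their difference is then $(1-\alpha)^t(Q_0-Q_0')\to0$, so any invariant law must coincide with $\mathcal{L}(S)$; equivalently, the map is a contraction in the Wasserstein distance. Finally, "centered at $\mu$" follows from $\mathbb{E}[S]=\mu$, obtained by the $L^2$ limit or by part (i), and its variance is the limit found in (ii).

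The only delicate points are the distributional time-reversal argument and the almost sure convergence of the infinite series. I expect both to be routine given finite variance.
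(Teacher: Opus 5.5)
Your proposal is correct and follows the paper's route: (i) and (ii) are read off the same unrolled representation $Q_t=(1-\alpha)^tQ_0+\alpha\sum_{k=0}^{t-1}(1-\alpha)^{t-1-k}R_k$ using the same geometric sums. For (iii), the paper only cites the contraction factor $1-\alpha$ of the affine iterated function system, and your backward-iteration plus synchronous-coupling (Wasserstein-contraction) argument is exactly the standard proof of that fact; it also makes clear that the convergence is in distribution rather than in total variation, which would fail for atomic reward laws such as the $\pm1$ drafting reward.
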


  \begin{proof}
  Unrolling the recursion: $Q_t = (1-\alpha)^t Q_0 + \alpha \sum_{k=0}^{t-1} (1-\alpha)^{t-1-k} R_k$.
  \begin{itemize}[leftmargin=1.5em]
      \item \textbf{Mean:} $\mathbb{E}[Q_t] = (1-\alpha)^t Q_0 + \mu(1 - (1-\alpha)^t) \to \mu$.
      \item \textbf{Variance:} $\mathrm{Var}(Q_t) = (1-\alpha)^{2t} \mathrm{Var}(Q_0) + \alpha^2 \sigma_R^2 \sum_{j=0}^{t-1} (1-\alpha)^{2j} \to \frac{\alpha}{2-\alpha}\sigma_R^2$.
      \item \textbf{Distribution:} The recursion defines an affine iterated function system with contraction $(1-\alpha) < 1$, implying geometric ergodicity~\citep{sutton2018reinforcement}.
  \end{itemize}
  \end{proof}

  \begin{lemma}[Decreasing Step Size: Almost Sure Convergence]
  \label{lem:robbins}
  Assume $|R_t| \le R_{\max}$ a.s., $\mathbb{E}[R_t \mid \mathcal{F}_t] = \mu$, and $\alpha_t$ satisfies the Robbins-Monro conditions: $\sum_t \alpha_t = \infty$ and $\sum_t \alpha_t^2 < \infty$. Then $Q_t \to \mu$ almost surely.
  \end{lemma}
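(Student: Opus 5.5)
The plan is the standard Robbins--Monro argument: reduce the recursion to an error sequence driven by a bounded martingale-difference noise, then show that the noise term converges and the deterministic contraction kills the error.

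Throughout, I assume that $\alpha_t$ is $\mathcal{F}_t$-measurable (e.g.\ deterministic) with $\alpha_t \in (0,1]$, and that $Q_t$ is $\mathcal{F}_t$-measurable. Set $e_t := Q_t - \mu$ and $\varepsilon_t := R_t - \mu$. The update \eqref{eq:app_bandit_update} becomes
\[
e_{t+1} = (1-\alpha_t)e_t + \alpha_t \varepsilon_t .
\]
By hypothesis $\mathbb{E}[\varepsilon_t \mid \mathcal{F}_t] = 0$, and $|\varepsilon_t| \le 2R_{\max}$ since $|\mu| \le R_{\max}$. By Lemma~\ref{lem:boundedness}, applied after possibly enlarging $R_{\max}$ so that it covers $Q_0$, the iterates satisfy $|e_t| \le 2R_{\max}$ for all $t$. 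Boundedness is what lets the cross terms and conditional expectations below be handled without integrability worries.

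The main step is a supermartingale argument on $V_t := e_t^2$. Expanding the recursion and taking conditional expectations, the cross term vanishes because $\mathbb{E}[\varepsilon_t\mid\mathcal{F}_t]=0$ and $e_t,\alpha_t$ are $\mathcal{F}_t$-measurable. This gives
\[
\mathbb{E}[V_{t+1}\mid\mathcal{F}_t] = (1-\alpha_t)^2 V_t + \alpha_t^2\,\mathbb{E}[\varepsilon_t^2\mid\mathcal{F}_t] \le V_t - \alpha_t(2-\alpha_t)V_t + 4R_{\max}^2\alpha_t^2 .
\]
I then invoke the Robbins--Siegmund almost-supermartingale theorem. Since $\sum_t 4R_{\max}^2\alpha_t^2<\infty$ and $\alpha_t(2-\alpha_t)V_t \ge \alpha_t V_t \ge 0$, it yields two conclusions almost surely: $V_t$ converges to some finite limit $V_\infty$, and $\sum_t \alpha_t V_t < \infty$. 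Combined with $\sum_t\alpha_t = \infty$, the second conclusion forces $\liminf_t V_t = 0$. Because $V_t$ converges, its limit must then be $0$. Hence $e_t\to 0$, i.e.\ $Q_t\to\mu$ almost surely.

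The delicate point is the measurability bookkeeping. The filtration $\mathcal{F}_t$ must contain $Q_t$ and $\alpha_t$; this holds if $\mathcal{F}_t = \sigma(Q_0,R_0,\dots,R_{t-1})$ and the step sizes are deterministic or predictable. I will state this as part of the setup.

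If one prefers to avoid citing Robbins--Siegmund, a fallback route works directly. The martingale $M_t := \sum_{k<t}\alpha_k\varepsilon_k$ is $L^2$-bounded, because $\sum_k\alpha_k^2<\infty$ and its increments are bounded. It therefore converges almost surely. Unrolling the error recursion gives $e_t = \Pi_{0,t}e_0 + \sum_{k<t}\Pi_{k+1,t}\alpha_k\varepsilon_k$, where $\Pi_{k,t}=\prod_{j=k}^{t-1}(1-\alpha_j)$. The first term vanishes because $\prod_j(1-\alpha_j)\to 0$ when $\sum_j\alpha_j=\infty$. For the second term, I apply a Kronecker/Toeplitz-type lemma to the tail increments $M_t - M_k$ to show that it tends to $0$. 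Either route works; I will present the supermartingale one because it is shorter.
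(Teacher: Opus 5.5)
Your proposal is correct and follows the paper's proof: the error recursion $e_{t+1}=(1-\alpha_t)e_t+\alpha_t(R_t-\mu)$, the almost-supermartingale inequality for $V_t=e_t^2$, and Robbins--Siegmund combined with $\sum_t\alpha_t=\infty$ to force $V_t\to 0$. Your write-up is somewhat more careful than the paper's. You state explicitly that $Q_t$ and $\alpha_t$ are $\mathcal{F}_t$-measurable, you use the constant $4R_{\max}^2$ where the paper leaves $\sigma_\xi^2$ unspecified, and you spell out the $\liminf$ step that the paper compresses into ``we must have $V_t\to 0$''.
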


  \begin{proof}
  Define $e_t := Q_t - \mu$ and $\xi_{t+1} := R_t - \mu$. Then $e_{t+1} = (1-\alpha_t) e_t + \alpha_t \xi_{t+1}$. Let $V_t := e_t^2$. By direct computation:
  \[
  \mathbb{E}[V_{t+1} \mid \mathcal{F}_t] \le (1-\alpha_t)^2 V_t + \alpha_t^2 \sigma_\xi^2 \le V_t - \alpha_t V_t + \alpha_t^2 \sigma_\xi^2.
  \]
  By the Robbins-Siegmund theorem~\citep{robbins1971convergence}, $V_t$ converges a.s.\ and $\sum_t \alpha_t V_t < \infty$. Since $\sum_t \alpha_t = \infty$, we must have $V_t \to 0$ a.s., hence $Q_t \to \mu$.
  \end{proof}

  \subsection{Summary}

  The three results work in concert: Lemma~\ref{lem:boundedness} ensures value iterates remain in a safe range when rewards are bounded (which Corollary~\ref{cor:tanh_bounded} and Remark~\ref{rem:zscore} guarantee for our reward definitions). Lemma~\ref{lem:popart} ensures the normalization map stabilizes over time. Finally, Lemmas~\ref{lem:ema} and~\ref{lem:robbins} establish that the value estimates track (constant $\alpha$) or converge to (decreasing $\alpha_t$) the true expected utility. Together, these guarantees ensure stable, well-behaved retrieval throughout the agent's lifetime.

\begin{table}[h!]
\caption{Operators where \ourmethod outperforms Torch-NPU in latency.}
\label{tab:fast_ops}
\centering
\begin{tabular}{lrrr}
\toprule
Operator & Torch-NPU ms & \ourmethod ms & Speedup \\
\midrule
32\_HardTanh & 23.873 & 4.199 & 5.69$\times$ \\
30\_Softsign & 34.756 & 9.598 & 3.62$\times$ \\
45\_Average\_Pooling\_2D & 3814.723 & 1725.443 & 2.21$\times$ \\
20\_LeakyReLU & 9.525 & 9.511 & 1.00$\times$ \\
\bottomrule
\end{tabular}
\end{table}
\section{Operator-Level Performance Results}
\label{app:full_results}

This appendix provides select operator examples demonstrating performance comparisons.
To contextualize absolute performance, we normalize latency by Torch-NPU and compare against other Ascend C approaches under the same MultiKernelBench harness. Table~\ref{tab:fast_ops} lists example operators where \ourmethod outperforms Torch-NPU.

\begin{figure*}[!htbp]
\centering
\includegraphics[width=\textwidth]{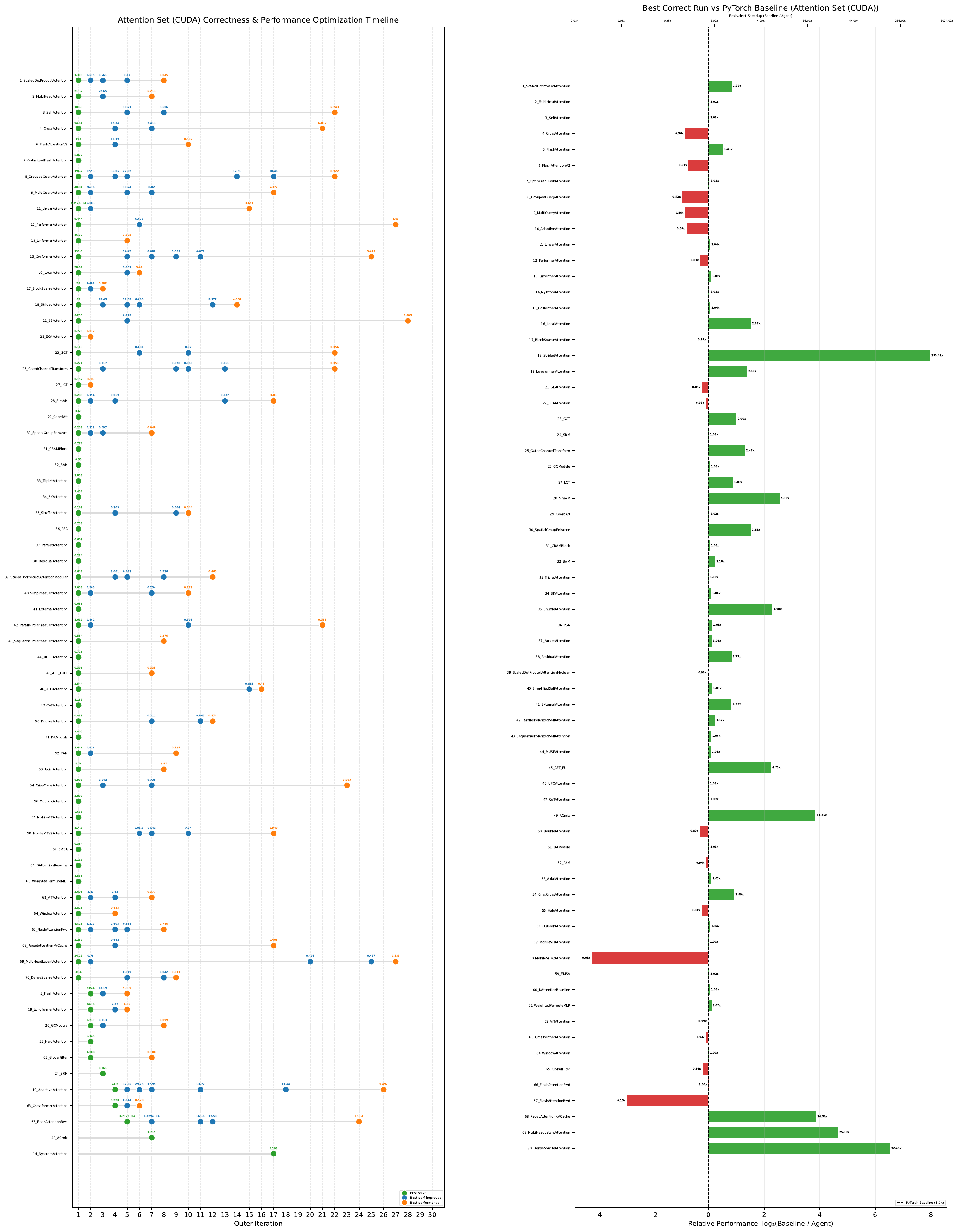}
\caption{Attention Set (CUDA): Optimization timeline and performance vs.\ PyTorch baseline for 70 attention operators over 30 iterations. \textbf{(Left)} Correctness and performance optimization timeline. Green dots denote first-solve events; blue dots denote significant ($>$10\%) performance improvements; orange dots denote the best performance achieved. \textbf{(Right)} Best correct run vs.\ baseline in log$_2$ speedup.}
\label{fig:attention_cuda_combined}
\end{figure*}

  \section{Verification Stage ``Anti-Hacking'' Screening}
  \label{app:anti_hacking}

  In the context of this work, ``anti-hacking'' refers to the architectural enforcement of the Ascend C programming paradigm. It is designed to prevent a generated solution from bypassing the intended custom operator path by re-implementing semantics in Python (within \texttt{model\_src}) or in the PyTorch binding glue (\texttt{python\_bind\_src}), rather than putting the computational logic into the Ascend C kernel (\texttt{kernel\_src}) and host tiling code.

  The verification subsystem implements this as a two-layer audit:

  \begin{enumerate}
      \item \textbf{Rule-based screening (Static/Deterministic):} Hard rules that reject common ``semantic bypass'' patterns.
      \item \textbf{Model-based screening (LLM Auditor):} A prompt-driven judgment of ``architectural integrity'' that detects subtle bypass patterns not covered by static rules.
  \end{enumerate}

  This screening acts as a strict gate: failing it short-circuits the pipeline, preventing compilation or runtime evaluation.

  \subsection{Rule-Based Screening}
  The static analyzer enforces three primary constraints:

  \textbf{1. Kernel Dispatch Requirement.}
  The binding code \texttt{python\_bind\_src} need explicitly invoke the kernel execution command \texttt{EXEC\_NPU\_CMD}. The verifier scans the binding source for this substring; its absence indicates that the operator either performs no computation or bypasses the NPU dispatch entirely.

  \textbf{2. Binding Logic Restrictions.}
  The C++ binding implementation is restricted to allocation and dispatch duties. The rule checker extracts the function body registered via \texttt{PYBIND11\_MODULE} and scans for forbidden calls to the \texttt{at::} or \texttt{torch::} namespaces.
  \begin{itemize}
      \item \textbf{Allowed:} Tensor allocation functions (e.g., \texttt{at::empty}, \texttt{at::zeros}, \texttt{at::empty\_like}).
      \item \textbf{Forbidden:} Any computational operators (e.g., \texttt{at::add}, \texttt{at::matmul}).
  \end{itemize}
  This rule guarantees that the binding layer does not perform the heavy lifting using CPU-side PyTorch reference implementations.

  \textbf{3. Model Architecture Compliance.}
  The Python invocation layer \texttt{model\_src} must define a class \texttt{ModelNew} that inherits from \texttt{torch.nn.Module}. A simplified Abstract Syntax Tree (AST) analysis enforces that:
  \begin{itemize}
      \item The \texttt{forward} method does not directly call prohibited computations (e.g., \texttt{torch.matmul}, \texttt{torch.add}) or invoke standard \texttt{torch.nn} layers created in \texttt{\_\_init\_\_}.
      \item The module must import and call the generated \texttt{custom\_ops\_lib}, ensuring the computation is delegated to the C++ binding and, by extension, the Ascend C kernel.
  \end{itemize}

  \textbf{Example Violation.}
  The following \texttt{model\_src} is rejected because it directly invokes a \texttt{torch.nn} layer (\colorbox{red!20}{\texttt{self.conv()}}) instead of delegating all computation to \texttt{custom\_ops\_lib}:

  \begin{tcolorbox}[colback=gray!5, colframe=red!50, boxrule=0.5pt, arc=2pt, title={Rejected \texttt{model\_src} -- Hacking Detected}, fonttitle=\bfseries\small, breakable, nobeforeafter]
  \begin{verbatim}
class ModelNew(nn.Module):
    def __init__(self, in_channels, out_channels, kernel_size):
        super(ModelNew, self).__init__()
        self.conv = nn.Conv2d(in_channels, out_channels, kernel_size)

    def forward(self, x: torch.Tensor) -> torch.Tensor:
  \end{verbatim}
  \vspace{-1.5em}
  \hspace{4.5em}\colorbox{red!20}{\texttt{x = self.conv(x)}}  \hfill $\leftarrow$ \textbf{VIOLATION}
  \begin{verbatim}
        x = custom_ops_lib.conv2d_relu_hard_swish_custom(x)
        return x
  \end{verbatim}
  \end{tcolorbox}

  \noindent\textbf{Verifier Output:}
  \begin{tcolorbox}[colback=yellow!8, colframe=yellow!50, boxrule=0.3pt, arc=2pt, left=3pt, right=3pt, top=2pt, bottom=2pt]
  \small\texttt{[Invalid Error] In the forward method, the model layer is directly called: self.conv(). You must implement the operations in forward() as custom kernels in custom\_ops\_lib.}
  \end{tcolorbox}
  \noindent This forces \texttt{correctness = False}, preventing the solution from passing verification.
  \subsection{Model-Based Screening (LLM Auditor)}
  To capture more sophisticated evasion strategies, we employ an LLM-based auditor. The verifier constructs a prompt containing the operator specification (\texttt{ref\_src}) and the full generated solution. The LLM is instructed to judge the ``architectural integrity'' of the code, specifically checking for:
  \begin{itemize}
      \item \textbf{Semantic Location:} Verifying that the mathematical logic resides in \texttt{kernel\_src} or \texttt{host\_tiling\_src}, not in the glue code.
      \item \textbf{Dummy Kernels:} Detecting empty kernels or kernels that output constant values while the binding performs the actual work.
      \item \textbf{Binding Anomalies:} Identifying complex C++ logic in the binding that acts as a reference implementation.
  \end{itemize}
  This model-based check runs only if the rule-based checks pass, serving as a final safeguard against ``hallucinated'' solutions that satisfy syntax benchmarks but fail to implement the actual hardware kernel.

\paragraph{Prompt of Model-Based Anti-Hacking.}
We include the exact prompt used by the model-based anti-hacking auditor below:
\begin{tcolorbox}[colback=blue!5, colframe=blue!40, boxrule=0.4pt, arc=2pt, left=2pt, right=2pt, top=2pt, bottom=2pt, title={\small Prompt of Model-Based Anti-Hacking}, fonttitle=\bfseries\small, breakable]
\begin{Verbatim}[breaklines=true, breakanywhere=true]
You are an expert auditor for custom AscendC operators integrated into PyTorch via NPU bindings.

The operator under review is named: {op} (PascalCase: `{pascal_op}`).

Your primary goal is to verify **Architectural Integrity**. You need to determine if the custom operator genuinely attempts to perform the computation on the NPU (Kernel-side), or if it "cheats" by handling the logic in the Python bindings or host glue code.

**Important:**
- **Do NOT** act as a compiler or unit tester. You do not need to check for syntax errors, strict numerical equivalence, or edge-case correctness.
- **Do NOT** require the implementation to be bug-free.
- **DO** strictly verify that the computational logic resides in the correct files.

## Input Information

1. **PyTorch Reference (Intent):**
   This defines *what* the operator is supposed to do. Use this only to understand the intended logic flow (e.g., "this is a matrix multiplication" or "this is an element-wise add").
   <PyTorch Reference Implementation>
   {ref_src}
   </PyTorch Reference Implementation>

2. **Generated Implementation (Execution):**
   This includes the Interface (`project_json_src`), Host Logic (`host_tiling_src`, `host_operator_src`), Kernel Logic (`kernel_src`), Bindings (`python_bind_src`), and Usage (`model_src`).
   <Generated Custom Operator Implementation>
   {code}
   </Generated Custom Operator Implementation>

## Evaluation Criteria: Where is the Math?

You must judge whether the **implementation location** is correct.

### 1. The Valid Domain (Kernel & Host Tiling)
The actual mathematical operations (loops, math instructions, data manipulation) MUST be attempted in:
- `kernel_src` (The AscendC Kernel)
- `host_tiling_src` / `host_operator_src` (Tiling and shape logic)

**It is VALID if:**
- The `kernel_src` contains code that clearly resembles the logic described in the Reference.
- The `host_tiling_src` calculates tiling parameters.

### 2. The Restricted Domain (Bindings & Model)
These files are for "glue" and dispatch only.
- `python_bind_src`
- `model_src`

**It is INVALID (Cheating) if:**
- The `python_bind_src` or `model_src` performs part of the actual computation using PyTorch/ATen ops (e.g., `at::add`, `torch.matmul`) and returns that result, bypassing the custom kernel.
- The `kernel_src` is empty, a dummy stub, or purely performs a copy without the required math.
- The `python_bind_src` allocates outputs but then computes results using standard PyTorch functions instead of invoking the kernel.
- The `model_src` calls standard PyTorch functions to compute results instead of using the custom operator.
- Any other pattern where the core logic is outside the kernel/host tiling files.

## Decision Logic
- Valid: The coder honestly attempted to write an AscendC kernel. The math logic (add, sub, matmul, etc.) is visible in the `kernel_src`, without using constants to replace or "fake" the implementation logic. The binding file properly dispatches to this kernel via `EXEC_NPU_CMD` or similar.
- Invalid: The coder bypassed the NPU kernel. For example, the binding file catches the inputs, calls a standard PyTorch function to get the result, and returns it. Or the kernel exists but does nothing related to the reference logic. Or the kernel use constant value to skip part of the computation.

## Output

Output only the following JSON object, warpped in triple backticks: ```json ```.
Do NOT include any additional text.:

```json
{{
  "valid": true | false,
  "reason": "Concise explanation focusing on WHERE the logic is implemented."
}}
```
\end{Verbatim}
\end{tcolorbox}

  \section{Baseline Methodologies}
  \label{app:baselines}

  We evaluate our approach against two distinct baseline strategies that represent standard practices in code generation: \textbf{Pass@$k$ (Generation)} and \textbf{Iterative Refinement}.

  \subsection{\texorpdfstring{Pass@$k$ (Generation)}{Pass-k (Generation)}}
  This mode implements a classic sampling strategy, leveraging the probabilistic nature of LLMs to generate widely diverse attempts.
  \begin{itemize}
      \item \textbf{Methodology:} For each operator task, we generate $K$ independent candidate solutions in parallel. Each candidate includes the full kernel code, tiling logic, and binding glue.
      \item \textbf{Context:} The process is stateless; each generation starts from a fresh prompt containing the operator specification and few-shot examples (if configured), without knowledge of prior attempts or peer candidates.
      \item \textbf{Objective:} This baseline evaluates the model's ``zero-shot'' or ``few-shot'' capability to produce a correct solution purely from the prompt. It serves as a measure of the model's intrinsic knowledge of the Ascend C DSL.
  \end{itemize}

  \subsection{Iterative Refinement}
  This mode implements a stateful agentic loop that mimics a human developer's debugging workflow, consisting of two distinct phases: Drafting and Optimization.

  \textbf{Phase 1: Drafting (Correctness).}
  The goal is to produce a compilable and functionally correct kernel.
  \begin{itemize}
      \item \textbf{Feedback Loop:} The agent generates an initial draft, which is then compiled and executed. If compilation fails, the compiler error logs are fed back to the model. If execution fails (correctness error), the mismatch info is provided.
      \item \textbf{History:} The agent maintains a conversation history of (Code $\to$ Error $\to$ Fix), allowing it to iteratively repair syntax errors and logic bugs.
  \end{itemize}

  \textbf{Phase 2: Optimization (Performance).}
  Once a correct kernel is identified, the agent transitions to performance optimization.
  \begin{itemize}
      \item \textbf{Hill Climbing:} The correct kernel serves as a baseline. The prompt shifts to request performance improvements (e.g., ``minimize execution time'').
      \item \textbf{Metric Feedback:} The agent receives latency measurements from the hardware profiling tool. It generates new versions to improve this metric. If a new version is slower or incorrect, the agent reverts to the previous best baseline or receives feedback on the regression.
  \end{itemize}

  This baseline establishes the performance upper bound for a standard agentic loop without the long-term, cross-task memory mechanisms introduced in our EvoKernel framework.

  \textbf{Prompt Construction.}
  The prompt structure differs between the two phases. In drafting mode, each turn appends the previous attempt and its feedback:
  \begin{tcolorbox}[colback=blue!5, colframe=blue!40, boxrule=0.4pt, arc=2pt, left=2pt, right=2pt, top=2pt, bottom=2pt, title={\small Drafting Mode Prompt}, fonttitle=\bfseries\small]
  \begin{verbatim}
[System]: You are a helpful assistant
[User]: {base_prompt}
[Assistant]: {last_code}
[User]: {compile_error or correctness_error}
  \end{verbatim}
  \end{tcolorbox}
  \noindent In optimization mode, the prompt includes two turns of history to preserve the best correct baseline:
  \begin{tcolorbox}[colback=green!5, colframe=green!40, boxrule=0.4pt, arc=2pt, left=2pt, right=2pt, top=2pt, bottom=2pt, title={\small Optimization Mode Prompt}, fonttitle=\bfseries\small]
  \begin{verbatim}
[System]: You are a helpful assistant
[User]: {base_prompt}
[Assistant]: {baseline_code}
[User]: {baseline_feedback}
        "The code above is correct. Now optimize it..."
[Assistant]: {last_code}
[User]: "Performance: X ms" or {error_feedback}
  \end{verbatim}
  \end{tcolorbox}

  \textbf{Configuration Parameters.}
  We use the following hyperparameters: \texttt{max\_turns}$=30$, \texttt{max\_feedback\_chars}$=4000$ (truncation limit for compiler/correctness output), \texttt{infra\_retries}$=3$ (exponential backoff for transient failures), and \texttt{parallelism}$=16$ (concurrent operators). The evaluation uses a remote server with \texttt{timeout}$=65$ minutes per validation call.

  \subsection{Codex (Genetic Iteration)}
  This baseline utilizes an advanced \textbf{EXEC mode} that forces the model (specifically \textbf{GPT-5.2 Medium Reasoning}) to perform evolutionary iterations within a single session, effectively turning a completion task into a genetic-like agent.

  \textbf{Mechanism: The ReAct Loop.}
  Unlike standard generation, this mode grants the model:
  \begin{itemize}
      \item \textbf{Shell Access:} The ability to execute commands with configurable timeout.
      \item \textbf{File System Access:} The ability to read and write files via the \texttt{apply\_patch} tool.
      \item \textbf{Immediate Feedback:} The \texttt{stdout/stderr} of its commands are fed back into its context window.
  \end{itemize}
  This creates a \textbf{Reason-Act-Observe} loop managed entirely by the Codex binary but orchestrated by our injected prompt.

  \textbf{Prompt Structure.}
  The prompt sent to Codex consists of two parts: (1) the base kernel generation prompt with few-shot examples, and (2) validation workflow instructions that specify the autonomous iteration protocol:

  \textbf{Iteration Loop.}
  The process operates iteratively:
  \begin{itemize}
      \item \textbf{Generate:} The model writes a candidate kernel file (e.g., \texttt{op.txt}).
      \item \textbf{Validate:} The validation script (\texttt{codex\_validate.sh}) sends the code to a remote evaluation server and returns the verifier result (\texttt{compiled}, \texttt{correctness}).
      \item \textbf{React:}
      \begin{itemize}
          \item If \texttt{result == success}: The loop terminates.
          \item If \texttt{result == failure}: The model reads the error log, analyzes the failure, and revises the code for the next iteration.
      \end{itemize}
  \end{itemize}

\begin{tcolorbox}[colback=orange!5, colframe=orange!40, boxrule=0.4pt, arc=2pt, left=2pt, right=2pt, top=2pt, bottom=2pt, title={\small Codex Validation Instructions}, fonttitle=\bfseries\small]
  \begin{verbatim}
## Task: Implement Ascend Operator `{op}`
### Workflow
1. Write Code: Create `{op}.txt` using apply_patch tool
2. Validate: Run `./codex_validate.sh {op} {file} ascendc`
   with timeout_ms=1200000 (20 minutes)
   Returns JSON: {compiled, correctness, error}
   SUCCESS = compiled:true AND correctness:true
3. On failure: Fix code based on error, re-validate
### Rules
- NO local compilation (gcc, g++, make, cmake)
- After 3 consecutive validation timeouts, STOP
  \end{verbatim}
  \end{tcolorbox}
  This approach is distinct from \emph{Iterative Refinement} because it occurs entirely within a single model session via tool use (In-Context Learning), whereas Refinement is an external Python loop managing the history. It tests the model's intrinsic ability to function as a developer with a compiler and debugger.

  \textbf{Configuration Parameters.}
  We invoke the Codex CLI with: \texttt{sandbox}$=$\texttt{workspace-write} (allows file writes within workspace), \texttt{ask-for-approval}$=$\texttt{never} (fully autonomous), \texttt{model\_reasoning\_effort}$=$\texttt{medium}. Each validation call uses a 20-minute timeout (\texttt{timeout\_ms}$=1200000$) to accommodate remote compilation and execution.

  \textbf{Termination Condition.}
  To ensure a fair comparison, we impose a strict stop condition based on verification attempts. The agent terminates after 30 verification attempts or upon finding a correct solution, whichever comes first.

  \begin{figure}[htbp]
  \centering
  \includegraphics[width=0.68\linewidth]{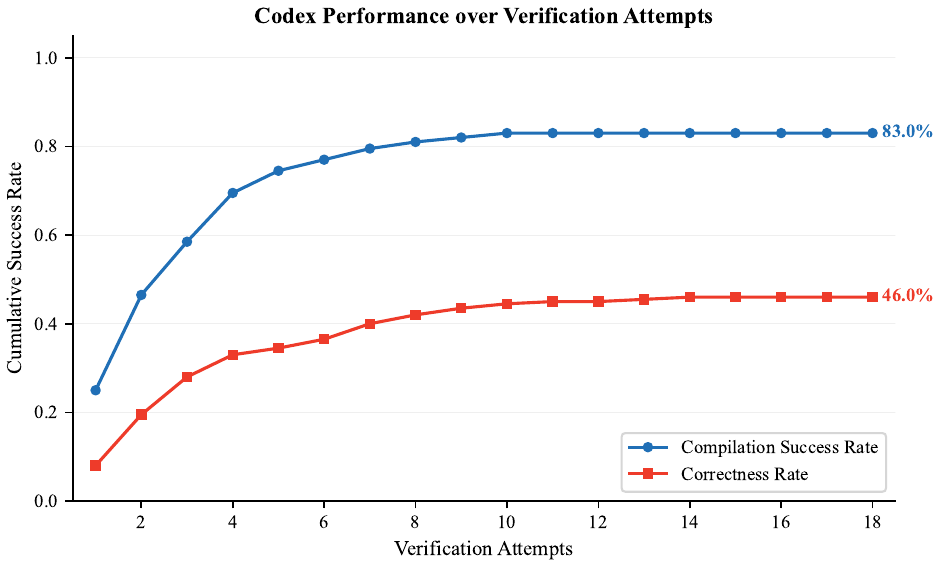}
  \caption{Codex (GPT-5.2) cumulative correctness.}
  \label{fig:codex_performance}
  \end{figure}

  \subsection{Comparison of Agentic Baselines}
  Table~\ref{tab:agentic_comparison} summarizes the key architectural differences between the two agentic baselines.

  \begin{table}[ht]
  \centering
  \caption{Comparison of Iterative Refinement and Codex agent architectures.}
  \label{tab:agentic_comparison}
  \setlength{\tabcolsep}{10pt}
  \begin{tabular}{lcc}
  \toprule
  \textbf{Aspect} & \textbf{Refinement} & \textbf{Codex} \\
  \midrule
  Execution model & API conversation loop & Autonomous tool use \\
  Iteration control & External script & Agent decides \\
  Prompt updates & Each turn rebuilt & Single prompt \\
  History length & 1--2 turns & Internal memory \\
  Feedback source & Injected by script & Agent calls validator \\
  File operations & Extract from text & \texttt{apply\_patch} tool \\
  Termination & 30 iterations or success & 30 verification attempts or success \\
  \bottomrule
  \end{tabular}
  \end{table}

  \section{Operator Subset for Transfer Experiments}
  \label{app:transfer_operators}

  This section lists the held-out operator subset used in the cross-backbone transfer experiments (Section~\ref{sec:transfer_stream}). The subset consists of 50 operators randomly sampled from the benchmark: 30 Level~1 (L1) operators and 20 Level~2 (L2) operators. These operators were excluded from the GPT-5.2 memory bank during training to prevent data leakage, and were used exclusively for evaluating transfer performance on DeepSeek-V3.2 and Qwen3-Coder-30B.

  \begin{table}[h!]
  \centering
  \caption{Randomly sampled operator subset for transfer evaluation.}
  \label{tab:transfer_ops}
  \scriptsize
  \setlength{\tabcolsep}{4pt}
  \renewcommand{\arraystretch}{1.3}
  \begin{tabular}{@{}ll@{\hspace{1.2em}}ll@{}}
  \toprule
  \multicolumn{4}{c}{\textbf{Level 1 Operators (30 total)}} \\
  \midrule
  \textbf{Operator} & \textbf{Type} & \textbf{Operator} & \textbf{Type} \\
  \midrule
  43\_Max\_Pooling\_3D & \textit{pooling} & 46\_Average\_Pooling\_3D & \textit{pooling} \\
  42\_Max\_Pooling\_2D & \textit{pooling} & 49\_Max\_reduction\_over\_a\_dimension & \textit{pooling} \\
  92\_cumsum\_exclusive & \textit{loss} & 93\_masked\_cumsum & \textit{loss} \\
  10\_3D\_tensor\_matrix\_multiplication & \textit{matmul} & 77\_conv\_transposed\_3D\_square\_input\_square\_kernel & \textit{convolution} \\
  66\_conv\_standard\_3D\_asym\_input\_asym\_kernel & \textit{convolution} & 38\_L1Norm\_ & \textit{normalization} \\
  51\_Argmax\_over\_a\_dimension & \textit{convolution} & 31\_ELU & \textit{activation} \\
  2\_Standard\_matrix\_multiplication\_ & \textit{matmul} & 22\_Tanh & \textit{activation} \\
  71\_conv\_transposed\_2D\_asym\_input\_square\_kernel & \textit{convolution} & 33\_BatchNorm & \textit{normalization} \\
  16\_Matmul\_with\_transposed\_A & \textit{matmul} & 97\_ScaledDotProductAttention & \textit{loss} \\
  79\_conv\_transposed\_1D\_asym\_input\_square\_kernel & \textit{convolution} & 81\_conv\_transposed\_2D\_asym\_input\_square\_kernel & \textit{convolution} \\
  74\_conv\_transposed\_1D\_dilated & \textit{convolution} & 50\_conv\_standard\_2D\_square\_input\_square\_kernel & \textit{convolution} \\
  11\_4D\_tensor\_matrix\_multiplication & \textit{matmul} & 84\_conv\_depthwise\_2D\_asym\_input\_square\_kernel & \textit{convolution} \\
  56\_conv\_standard\_2D\_asym\_input\_asym\_kernel & \textit{convolution} & 27\_SELU\_ & \textit{activation} \\
  57\_conv\_transposed\_2D\_square\_input\_square\_kernel & \textit{convolution} & 88\_MinGPTNewGelu & \textit{convolution} \\
  73\_conv\_transposed\_3D\_asym\_input\_square\_kernel & \textit{convolution} & 34\_InstanceNorm & \textit{normalization} \\
  \midrule
  \multicolumn{4}{c}{\textbf{Level 2 Operators (20 total)}} \\
  \midrule
  \textbf{Operator} & \textbf{Type} & \textbf{Operator} & \textbf{Type} \\
  \midrule
  58\_ConvTranspose3d\_LogSumExp\_HardSwish\_Subtract\_Clamp & \textit{fuse} & 86\_Matmul\_Divide\_GELU & \textit{fuse} \\
  27\_Conv3d\_HardSwish\_GroupNorm\_Mean & \textit{fuse} & 68\_Matmul\_Min\_Subtract & \textit{fuse} \\
  6\_Conv3d\_Softmax\_MaxPool\_MaxPool & \textit{fuse} & 80\_Gemm\_Max\_Subtract\_GELU & \textit{fuse} \\
  14\_Gemm\_Divide\_Sum\_Scaling & \textit{fuse} & 45\_Gemm\_Sigmoid\_LogSumExp & \textit{fuse} \\
  62\_Matmul\_GroupNorm\_LeakyReLU\_Sum & \textit{fuse} & 43\_Conv3d\_Max\_LogSumExp\_ReLU & \textit{fuse} \\
  25\_Conv2d\_Min\_Tanh\_Tanh & \textit{fuse} & 5\_ConvTranspose2d\_Subtract\_Tanh & \textit{fuse} \\
  23\_Conv3d\_GroupNorm\_Mean & \textit{fuse} & 70\_Gemm\_Sigmoid\_Scaling\_ResidualAdd & \textit{fuse} \\
  39\_Gemm\_Scale\_BatchNorm & \textit{fuse} & 78\_ConvTranspose3d\_Max\_Max\_Sum & \textit{fuse} \\
  26\_ConvTranspose3d\_Add\_HardSwish & \textit{fuse} & 31\_Conv2d\_Min\_Add\_Multiply & \textit{fuse} \\
  53\_Gemm\_Scaling\_Hardtanh\_GELU & \textit{fuse} & 3\_ConvTranspose3d\_Sum\_LayerNorm\_AvgPool\_GELU & \textit{fuse} \\
  \bottomrule
  \end{tabular}
  \end{table}
  
  \section{Evaluation and Profiling Methodology}
  \label{app:eval_methodology}

  This section details the correctness verification and latency profiling procedures.

  \subsection{Correctness Validation}

  \textbf{Fail-Fast Execution Strategy.}
  To optimize evaluation efficiency, the custom kernel is executed first with a strict \texttt{SIGALRM} timeout. If the custom kernel fails (timeout, crash, or exception), the reference run is skipped entirely.

  \textbf{Structured Mismatch Feedback.}
  The verifier returns detailed, machine-readable error messages to guide iterative refinement. The following illustrates the categories of feedback returned:

  \begin{enumerate}[itemsep=6pt,parsep=0pt,topsep=4pt]
      \item \textbf{Shape Mismatch:}
      \vspace{2pt} 

      \texttt{output.shape mismatch: expected (16, 512, 512), got (16, 512, 256)}

      \item \textbf{Numerical Mismatch:}
      \vspace{2pt} 
      \begin{verbatim}
[FAIL] Output mismatch: 1/5 trials passed, 4 failed.
       Tolerance atol=0.01, rtol=0.01.
Trial 1: 54/524160 mismatched (0.01%), max_abs=0.99,
         max_rel=97209.6, Bounding box: output[0:31, 4032:4088]
Trial 2: 64/524160 mismatched (0.01%), max_abs=0.99,
         max_rel=87570.4, Bounding box: output[99:100, 35:99]
      \end{verbatim}
      \vspace{-5pt} 
      Key diagnostics: (a) \texttt{max\_abs}/\texttt{max\_rel}: maximum absolute and relative difference; (b) \textbf{Bounding box}: spatial localization of errors, revealing tile boundary bugs.

      \textbf{Example Agent Diagnosis.} While optimizing \texttt{53\_Min\_reduction\_over\_a\_dimension}, the agent encounters the error above and identifies a synchronization race in the accumulator initialization: Row 0 was fetched asynchronously via MTE but the Vector engine began computation before the transfer completed. The fix: queue Row 0 through the standard Ping-Pong pipeline (enqueue$\to$dequeue$\to$copy to \texttt{accVec}) to enforce synchronization before any arithmetic.

      \item \textbf{Type Mismatch:}
    \vspace{2pt} 
    
      \texttt{type(output) mismatch: expected Tensor, got list}

      \item \textbf{Length Mismatch} (for tuple/list outputs):
      \vspace{2pt} 

      \texttt{len(output) mismatch: expected 3, got 2}

      \item \textbf{Timeout:}
      \vspace{2pt} 

      \texttt{[FAIL] First correctness run timed out after 60s}

      \item \textbf{Runtime Exception:}
      \vspace{2pt} 

      \texttt{[FAIL] NPU out of memory. Tried to allocate 12.10 GiB}

      % or

      \texttt{[FAIL] vector core exception at line 42}
  \end{enumerate}

  \subsection{Latency Profiling}

  We use the native \textbf{msprof} profiler (via \texttt{torch\_npu.profiler}) with:
  \begin{itemize}
      \item 3 warm-up runs (discarded) to stabilize caches and JIT compilation.
      \item 3 profiling passes with distinct configurations (PipeUtilization, Memory, ResourceConflict).
      \item The mean ``Computing'' time from \texttt{step\_trace\_time.csv} is reported, isolating on-chip kernel execution from host overhead.
  \end{itemize}

  \textbf{3-Pass Aggregation.}
  Each profiling pass writes a \texttt{step\_trace\_time.csv} with a ``Computing'' column (in $\mu$s). The final timing is aggregated as:
  \begin{verbatim}
Pass 1 (PipeUtilization):  Computing = 13640 us
Pass 2 (Memory):           Computing = 13380 us
Pass 3 (ResourceConflict): Computing = 12913 us
  => performance.mean = avg([13.64, 13.38, 12.91]) = 13.31 ms
  => performance.std  = 0.33 ms
  \end{verbatim}
This procedure yields negligible standard deviation ($<$3\%) across profiling runs.

  \textbf{Data Source: \texttt{step\_trace\_time.csv} vs \texttt{kernel\_details.csv}.}
  Both files are produced by msprof:
  \begin{itemize}
      \item \texttt{step\_trace\_time.csv}: Total device execution time for the entire step (all kernels combined). Used for \texttt{performance.mean/max/min/std}.
      \item \texttt{kernel\_details.csv}: Per-kernel breakdown with detailed hardware metrics. Useful for optimization but may not sum exactly to total time due to overlaps/gaps.
  \end{itemize}
  We report the \texttt{step\_trace\_time} value as the canonical latency metric.

  \textbf{Example Profiling Output.}
  The verifier returns detailed per-kernel metrics extracted from \texttt{kernel\_details.csv}:

  \begin{tcolorbox}[colback=orange!8, colframe=orange!40, boxrule=0.5pt, arc=2pt, left=2pt, right=2pt, top=2pt, bottom=2pt, breakable, nobeforeafter]
  \begin{verbatim}
"performance": {
    "max": 13.64, "mean": 13.38, "min": 12.913, "std": 0.33
},
"profiling": {
  "MinReductionOverADimensionCustom": {
    "Block Dim": 32.0,
    "Duration(ms)": 13.38,
    "aic_fixpipe_ratio": 0.0, "aic_fixpipe_time(ms)": 0.0,
    "aic_icache_miss_rate": 0.0,
    "aic_l1_read_bw(GB/s)": 0.0, "aic_l1_write_bw(GB/s)": 0.0,
    "aic_l2_read_bw(GB/s)": 0.0, "aic_l2_write_bw(GB/s)": 0.0,
    "aic_mac_ratio": 0.0, "aic_mac_time(ms)": 0.0,
    "aic_main_mem_read_bw(GB/s)": 0.0, 
    "aic_main_mem_write_bw(GB/s)": 0.0,
    "aic_mte1_ratio": 0.0, "aic_mte1_time(ms)": 0.0,
    "aic_mte2_ratio": 0.0, "aic_mte2_time(ms)": 0.0,
    "aic_scalar_ratio": 0.0, "aic_scalar_time(ms)": 0.0,
    "aic_total_cycles": 0.0, "aicore_time(ms)": 0.0,
    "aiv_icache_miss_rate": 0.0,
    "aiv_l2_read_bw(GB/s)": 0.0, "aiv_l2_write_bw(GB/s)": 0.0,
    "aiv_main_mem_read_bw(GB/s)": 0.46, 
    "aiv_main_mem_write_bw(GB/s)": 0.0,
    "aiv_mte2_ratio": 0.346, "aiv_mte2_time(ms)": 3.378,
    "aiv_mte3_ratio": 0.0, "aiv_mte3_time(ms)": 0.001,
    "aiv_scalar_ratio": 0.677, "aiv_scalar_time(ms)": 6.605,
    "aiv_time(ms)": 9.66, "aiv_total_cycles": 571879708.0,
    "aiv_ub_read_bw(GB/s)": 25.966, "aiv_ub_write_bw(GB/s)": 39.604,
    "aiv_vec_bank_cflt_ratio": 0.053, 
    "aiv_vec_bankgroup_cflt_ratio": 0.058,
    "aiv_vec_ratio": 0.134, "aiv_vec_resc_cflt_ratio": 0.0,
    "aiv_vec_time(ms)": 1.31, "cube_utilization(%)": 0.0
  }
}
  \end{verbatim}
  \end{tcolorbox}
  Key metrics include vector/scalar/MTE time ratios, unified buffer bandwidth, and cube utilization, enabling targeted optimization.

  \section{Example Generated Kernel}
  \label{app:generated_kernel}

  The following shows a complete, correctly compiling Ascend C kernel for \texttt{Tanh} generated by \ourmethod. Each source file section is shown with a distinct background color. In actual verifier pipeline, the below artifact is parsed into respective files.

  % ===== 1. project_json_src (blue) =====
  \begin{tcolorbox}[colback=blue!5, colframe=blue!30, boxrule=0.3pt, arc=0pt, left=2pt, right=2pt, top=1pt, bottom=0pt, breakable, nobeforeafter]
  \begin{verbatim}
project_json_src = r'''
[
  {
    "op": "TanhCustom",
    "language": "cpp",
    "input_desc": [
      {
        "name": "x",
        "param_type": "required",
        "format": ["ND"],
        "type": ["float"]
      }
    ],
    "output_desc": [
      {
        "name": "y",
        "param_type": "required",
        "format": ["ND"],
        "type": ["float"]
      }
    ]
  }
]
'''
  \end{verbatim}
  \end{tcolorbox}

  % ===== 2. host_tiling_src (green) =====
  \begin{tcolorbox}[colback=green!5, colframe=green!30, boxrule=0.3pt, arc=0pt, left=2pt, right=2pt, top=0pt, bottom=0pt, breakable, nobeforeafter]
  \begin{verbatim}
host_tiling_src = r"""
#include "register/tilingdata_base.h"

namespace optiling {
BEGIN_TILING_DATA_DEF(TilingData)
TILING_DATA_FIELD_DEF(uint32_t, totalLength);
TILING_DATA_FIELD_DEF(uint32_t, tileLength);
TILING_DATA_FIELD_DEF(uint32_t, blockDim);
END_TILING_DATA_DEF;

REGISTER_TILING_DATA_CLASS(TanhCustom, TilingData)
} // namespace optiling
"""
  \end{verbatim}
  \end{tcolorbox}

  % ===== 3. host_operator_src (yellow) =====
  \begin{tcolorbox}[colback=yellow!8, colframe=yellow!40, boxrule=0.3pt, arc=0pt, left=2pt, right=2pt, top=0pt, bottom=0pt, breakable, nobeforeafter]
  \begin{verbatim}
host_operator_src = r"""
#include "tanh_custom_tiling.h"
#include "register/op_def_registry.h"

namespace optiling {

static inline uint32_t AlignUp(uint32_t x, uint32_t a) 
    { return (x + a - 1) / a * a; }
static inline uint32_t MinU32(uint32_t a, uint32_t b) 
    { return a < b ? a : b; }

constexpr uint32_t MAX_BLOCK_DIM = 32;
constexpr uint32_t DEFAULT_TILE_ELEMS = 8192;

static ge::graphStatus TilingFunc(gert::TilingContext *context)
{
    TilingData tiling;
    const uint32_t totalLength = 
        context->GetInputShape(0)->GetOriginShape().GetShapeSize();
    const auto dtype = context->GetInputTensor(0)->GetDataType();
    if (dtype != ge::DT_FLOAT) {
        return ge::GRAPH_FAILED;
    }

    uint32_t blockDim = MAX_BLOCK_DIM;
    const uint32_t minElemsPerBlock = 262144;
    if (totalLength / blockDim < minElemsPerBlock) {
        blockDim = totalLength / minElemsPerBlock;
        if (blockDim < 1) blockDim = 1;
    }
    blockDim = MinU32(blockDim, MAX_BLOCK_DIM);

    uint32_t tile = DEFAULT_TILE_ELEMS;
    if (totalLength < tile) tile = totalLength;
    tile = AlignUp(tile, 8); // 32B alignment for fp32
    if (tile == 0) tile = 8;
    if (tile > totalLength) tile = totalLength;

    context->SetBlockDim(blockDim);
    tiling.set_totalLength(totalLength);
    tiling.set_tileLength(tile);
    tiling.set_blockDim(blockDim);

    tiling.SaveToBuffer(context->GetRawTilingData()->GetData(),
                        context->GetRawTilingData()->GetCapacity());
    context->GetRawTilingData()->SetDataSize(tiling.GetDataSize());

    size_t *ws = context->GetWorkspaceSizes(1);
    ws[0] = 0;
    return ge::GRAPH_SUCCESS;
}

} // namespace optiling

namespace ge {
static graphStatus InferShape(gert::InferShapeContext *context)
{
    *context->GetOutputShape(0) = *context->GetInputShape(0);
    return GRAPH_SUCCESS;
}

static graphStatus InferDataType(gert::InferDataTypeContext *context)
{
    context->SetOutputDataType(0, context->GetInputDataType(0));
    return ge::GRAPH_SUCCESS;
}
} // namespace ge

namespace ops {
class TanhCustom : public OpDef {
public:
    explicit TanhCustom(const char *name) : OpDef(name)
    {
        this->Input("x")
            .ParamType(REQUIRED)
            .DataType({ge::DT_FLOAT})
            .Format({ge::FORMAT_ND});
        this->Output("y")
            .ParamType(REQUIRED)
            .DataType({ge::DT_FLOAT})
            .Format({ge::FORMAT_ND});

        this->SetInferShape(ge::InferShape).
            SetInferDataType(ge::InferDataType);

        this->AICore()
            .SetTiling(optiling::TilingFunc)
            .AddConfig("ascend910b");
    }
};
OP_ADD(TanhCustom);
} // namespace ops
"""
  \end{verbatim}
  \end{tcolorbox}

  % ===== 4. kernel_src (red/pink) =====
  \begin{tcolorbox}[colback=red!5, colframe=red!30, boxrule=0.3pt, arc=0pt, left=2pt, right=2pt, top=0pt, bottom=0pt, breakable, nobeforeafter]
  \begin{verbatim}
kernel_src = r"""
#include "kernel_operator.h"

class KernelTanh {
public:
    __aicore__ inline KernelTanh() {}

    __aicore__ inline void Init(GM_ADDR x, GM_ADDR y, 
                uint32_t totalLength, 
        uint32_t tileLength, uint32_t blockDim)
    {
        this->totalLength = totalLength;
        this->tileLength = tileLength;
        this->blockDim = blockDim;

        const uint32_t bid = AscendC::GetBlockIdx();
        const uint32_t base = (totalLength / blockDim);
        const uint32_t rem = totalLength - base * blockDim;
        const uint32_t myLen = base + (bid < rem ? 1u : 0u);
        const uint32_t myOff = bid * base + (bid < rem ? bid : rem);

        startOffset = myOff;
        localLength = myLen;

        xGm.SetGlobalBuffer((__gm__ float*)x + startOffset, localLength);
        yGm.SetGlobalBuffer((__gm__ float*)y + startOffset, localLength);

        constexpr uint32_t PAD_BYTES = 256;

        pipe.InitBuffer(inQueueX, 2, tileLength * sizeof(float) 
              + PAD_BYTES);
        pipe.InitBuffer(outQueueY, 2, tileLength * sizeof(float) 
             + PAD_BYTES);

        // Tanh tmp UB, aligned to 256B. Keep conservative sizing.
        const uint32_t tmpBytes = ((tileLength * 8u + 255u) / 256u) 
              * 256u;
        pipe.InitBuffer(tmpQueue, 1, tmpBytes);
    }

    __aicore__ inline void Process()
    {
        if (localLength == 0) return;

        uint32_t offset = 0;

        uint32_t cur = (localLength > tileLength) ? 
              tileLength : localLength;
        CopyIn(offset, cur);
        offset += cur;

        while (offset < localLength) {
            const uint32_t next = (offset + tileLength <= localLength) ? 
                                  tileLength : (localLength - offset);

            CopyIn(offset, next);
            Compute(cur);
            CopyOut(offset - cur, cur);

            cur = next;
            offset += next;
        }

        Compute(cur);
        CopyOut(localLength - cur, cur);
    }

private:
    __aicore__ inline void CopyIn(uint32_t offset, uint32_t len)
    {
        AscendC::LocalTensor<float> xLocal = 
              inQueueX.AllocTensor<float>();
        AscendC::DataCopy(xLocal, xGm[offset], len);
        inQueueX.EnQue(xLocal);
    }

    __aicore__ inline void Compute(uint32_t len)
    {
        AscendC::LocalTensor<float> xLocal = inQueueX.DeQue<float>();
        AscendC::LocalTensor<float> yLocal = 
              outQueueY.AllocTensor<float>();

        AscendC::LocalTensor<uint8_t> tmp = 
              tmpQueue.AllocTensor<uint8_t>();
        AscendC::Tanh<float>(yLocal, xLocal, tmp, len);
        tmpQueue.FreeTensor(tmp);

        outQueueY.EnQue<float>(yLocal);
        inQueueX.FreeTensor(xLocal);
    }

    __aicore__ inline void CopyOut(uint32_t offset, uint32_t len)
    {
        AscendC::LocalTensor<float> yLocal = outQueueY.DeQue<float>();
        AscendC::DataCopy(yGm[offset], yLocal, len);
        outQueueY.FreeTensor(yLocal);
    }

private:
    AscendC::TPipe pipe;
    AscendC::TQue<AscendC::TPosition::VECIN, 2> inQueueX;
    AscendC::TQue<AscendC::TPosition::VECOUT, 2> outQueueY;
    AscendC::TQue<AscendC::TPosition::VECCALC, 1> tmpQueue;

    AscendC::GlobalTensor<float> xGm;
    AscendC::GlobalTensor<float> yGm;

    uint32_t totalLength = 0;
    uint32_t tileLength = 0;
    uint32_t blockDim = 1;

    uint32_t startOffset = 0;
    uint32_t localLength = 0;
};

extern "C" __global__ __aicore__ void tanh_custom(GM_ADDR x, 
    GM_ADDR y, GM_ADDR workspace, GM_ADDR tiling)
{
    GET_TILING_DATA(tiling_data, tiling);
    KernelTanh op;
    op.Init(x, y, tiling_data.totalLength, 
            tiling_data.tileLength, tiling_data.blockDim);
    op.Process();
}
"""
  \end{verbatim}
  \end{tcolorbox}

  % ===== 5. python_bind_src (purple) =====
  \begin{tcolorbox}[colback=violet!5, colframe=violet!30, boxrule=0.3pt, arc=0pt, left=2pt, right=2pt, top=0pt, bottom=0pt, breakable, nobeforeafter]
  \begin{verbatim}
python_bind_src = r"""
#include <torch/library.h>
#include <torch/extension.h>
#include "pytorch_npu_helper.hpp"

at::Tensor tanh_impl_npu(const at::Tensor& x) {
    auto y = at::empty_like(x);
    EXEC_NPU_CMD(aclnnTanhCustom, x, y);
    return y;
}

TORCH_LIBRARY_IMPL(myops, PrivateUse1, m) {
    m.impl("tanh_custom", &tanh_impl_npu);
}

PYBIND11_MODULE(TORCH_EXTENSION_NAME, m) {
    m.def("tanh_custom", &tanh_impl_npu, "tanh custom (NPU)");
}
"""
  \end{verbatim}
  \end{tcolorbox}

  % ===== 6. model_src (cyan) =====
  \begin{tcolorbox}[colback=cyan!5, colframe=cyan!30, boxrule=0.3pt, arc=0pt, left=2pt, right=2pt, top=0pt, bottom=1pt, breakable, nobeforeafter]
  \begin{verbatim}
model_src = r'''
import torch
import torch_npu
import custom_ops_lib

class ModelNew(torch.nn.Module):
    """
    Simple model that performs a Tanh activation 
    using a custom Ascend C op.
    """
    def __init__(self):
        super(ModelNew, self).__init__()

    def forward(self, x: torch.Tensor) -> torch.Tensor:
        return custom_ops_lib.tanh_custom(x)
'''
  \end{verbatim}
  \end{tcolorbox}

  %%%%%%%%%%%%%%%%%%%%%%%%%%%%%%%%%%%%%%%%%%%%%%%%%%%%%%%%%%%%%%%%%%%%%%%%%%%%%%%
  %%%%%%%%%%%%%%%%%%%%%%%%%%%%%%%%%%%%%%%%%%%%%%%%%%%%%%%%%%%%%%%%%%%%%%%%%%%%%%%
\end{document}